\newtheorem{theorem} {Theorem}
\newtheorem{lemma} {Lemma}
\newtheorem{definition} {Definition}
\newtheorem{corollary} {Corollary}
\newcommand{\R}{\mathcal{R}}
\newcommand{\mP}{\mathcal{P}}
\newcommand{\mK}{\mathcal{K}}
\newcommand{\mS}{\mathcal{S}}
\newcommand{\ball}{\mathbb{B}}
\newcommand{\mD}{\mathcal{D}}
\newcommand{\mV}{\mathcal{V}}
\newcommand{\mO}{\mathcal{O}}
\newcommand{\regret}{\textrm{regret}}
\def\reals{{\mathbb R}}
\newcommand{\oraclep}{\mathcal{O}_{\mP}}
\newcommand{\E}{\mathbb{E}}
\title{A Linearly Convergent Conditional Gradient Algorithm with Applications to Online and Stochastic Optimization}
\author{Dan Garber\\  
\small{Technion - Israel Inst. of Tech.} \\
\small{dangar@tx.technion.ac.il} 
\and Elad Hazan \\
\small{Technion - Israel Inst. of Tech.} \\ 
\small{ehazan@ie.technion.ac.il}}
\begin{document}

 \maketitle

\begin{abstract}
Linear optimization is many times algorithmically simpler than non-linear convex optimization. Linear optimization over matroid polytopes, matching polytopes and path polytopes are example of problems for which we have simple and efficient combinatorial algorithms, but whose non-linear convex counterpart is harder and admits significantly less efficient algorithms. This motivates the computational model of  convex optimization, including the offline, online and stochastic settings, using a linear optimization oracle.  
In this computational model we give several new results that improve over the previous state-of-the-art. 
Our main result is a novel conditional gradient algorithm for smooth and strongly convex optimization over polyhedral sets that performs only a single linear optimization step over the domain on each iteration and enjoys a linear convergence rate. This gives an exponential improvement in convergence rate over previous results. 

Based on this new conditional gradient algorithm we give the first algorithms for online convex optimization over polyhedral sets that perform only a single linear optimization step over the domain while having optimal regret guarantees, answering an open question of Kalai and Vempala, and Hazan and Kale. Our online algorithms also imply conditional gradient algorithms for non-smooth and stochastic convex optimization with the same convergence rates as projected (sub)gradient methods.
\end{abstract}

\pagestyle{myheadings}
\thispagestyle{plain}
\markboth{A Linearly Convergent CG Algorithm with Applications}{A Linearly Convergent CG Algorithm with Applications}

\section{Introduction}
First-order optimization methods, such as (sub)gradient-descent methods \cite{Shor,Nemirovski83,Nesterov07} and conditional-gradient methods \cite{FrankWolfe, Dunn80, Clarkson, Hazan08, Jaggi13b}, are often the method of choice for coping with very large scale optimization tasks. While theoretically attaining inferior convergence rate compared to other efficient optimization algorithms (e.g. interior point methods \cite{NesterovNemirovskii94}), modern optimization problems are often so large that using second-order information or other super-linear operations becomes practically infeasible. 

The computational bottleneck of (sub)gradient descent methods in many settings is the computation of orthogonal projections onto the convex domain. This is also the case with proximal methods \cite{Nesterov07}. Computing such projections is very efficient for simple domains such as the euclidean ball, the hypercube and the simplex but much more involved for more complicated domains, making these methods impractical for such problems in high-dimensional settings.

On the other hand, for many convex sets of interest, optimizing a linear objective over the domain could be done by a very efficient and simple combinatorial algorithm. Prominent examples for this phenomena are the matroid polytope for which there is a simple greedy algorithm for linear optimization, and the flow polytope (convex hull of all $s-t$ paths in a directed acyclic graph) for which linear optimization amounts to finding a minimum-weight path \cite{schrijver}.  Other important examples include the set of rotations for which linear optimization is very efficient using Wahba's algorithm \cite{wahba}, and the bounded cone of positive semidefinite matrices, for which linear optimization amounts to a leading eigenvector computation whereas projections require cimputing the \textit{singular value decomposition}. 

This phenomena motivates the study of optimization algorithms that require only linear optimization steps over the domain and their linear oracle complexity - that is, the number of linear objectives that the algorithm needs to minimize over the domain in order to achieve a desired accuracy with respect to the optimization objective.

The main contribution of this work is a conditional gradient (aka Frank-Wolfe) algorithm for oflline smooth and strongly convex optimization over polyhedral sets that requires only a single linear optimization step over the domain on each iteration and enjoys a linear convergence rate, an exponential improvement over previous results in this setting.

\begin{table}[H]
\begin{center} \label{table}
    \begin{tabular}{ | l | l | l |}
    \hline
    Setting & Previous & This paper \\ \hline
    Offline, smooth and strongly convex & $t^{-1}$ \cite{Jaggi13b} & $e^{-\Theta(t)}$ \\ \hline
    Offline, non-smooth and convex & $t^{-1/3}$ \cite{Hazan12} & $t^{-1/2}$ \\ \hline
    Offline, non-smooth and strongly convex & $t^{-1/3}$ \cite{Hazan12} & $\log(t)/t$ \\ \hline
    Stochastic, non-smooth and convex& $t^{-1/3}$ \cite{Hazan12}  & $t^{-1/2}$ \\ \hline
    Stochastic, non-smooth and strongly convex & $t^{-1/3}$ \cite{Hazan12}  & $\log{t}/t$ \\ \hline
    Online, convex losses & $T^{3/4}$ \cite{Hazan12} & $\sqrt{T}$  \\ \hline
    Online,  strongly convex losses &$T^{3/4}$ \cite{Hazan12} & $\log{T} $ \\ \hline
    \end{tabular}
\caption{Comparison between conditional gradient-based methods for optimization over polytopes in various settings. In the offline and stochastic settings we give the approximation error after $t$ linear optimization steps over the domain and $t$ gradient vector evaluations. In the online setting we give the order of the regret in a game of length $T$, and after at most $T$ 
linear optimization steps over the domain. In all results we omit the dependencies on constants and the dimension, these dependencies will be fully detailed in the sequel.}
\end{center}
\end{table}

We also consider the setting of \textit{online convex optimization} \cite{Zinkevich03, ShalevShwartz12, Hazan09, Warmuth10}. In this setting, a decision maker is iteratively required to choose a point in a fixed convex decision set. After choosing his point, an adversary chooses some convex function and the decision maker incurs a loss that is equal to the function evaluated at the point chosen. In this adversarial setting there is no hope to play as well as an optimal offline algorithm that has the benefit of hindsight. Instead the standard benchmark is an optimal naive offline algorithm that has the benefit of hindsight but must play the same fixed point on each round. The difference between the cumulative loss of the decision maker and that of of this offline benchmark is known as \textit{regret}. Based on our new linearly converging conditional gradient algorithm, we give algorithms for online convex optimization over polyhedral sets that perform only a single linear optimization step over the domain on each iteration while enjoying optimal regret guarantees in terms of the game length, answering an open question of Kalai and Vempala \cite{KalaiVempala}, and Hazan and Kale \cite{Hazan12}. Using existing techniques we give an extension of this algorithm to the {partial information} setting which obtains the best known regret bound for this setting.

Finally, our online algorithms also imply conditional gradient-like algorithms for offline non-smooth convex optimization and stochastic convex optimization that enjoys the same convergence rates as projected (sub)gradient methods in terms of the accuracy parameter $\epsilon$(albeit different dependency on constants and the dimension), but replacing the projection step of (sub)gradient methods with a single linear optimization step, again improving over the previous state of the art in these settings.

Our results are summarized in Table \ref{table}.

\subsection{Related work}
\paragraph*{The conditional gradient method for smooth optimization}
Conditional gradient methods for offline minimization of convex and smooth functions date back to the work of Frank and Wolfe \cite{FrankWolfe} which presented a method for smooth convex optimization over polyhedral sets whose iteration complexity amounts to a single linear optimization step over the convex domain. More recent works of Clarkson \cite{Clarkson}, Hazan \cite{Hazan08} and Jaggi \cite{Jaggi13b} consider the conditional gradient method for the cases of smooth convex optimization over the simplex, semidefinite cone and arbitrary convex and compact sets respectively. Despite its relatively slow convergence rate - additive error of the order $1/t$ after $t$ iterations, the benefit of the method is twofold: i) its computational simplicity - each iteration is comprised of optimizing a linear objective over the set and ii) it is known to produce sparse solutions (for the simplex this means only a few non zeros entries, for the semidefinite cone this means that the solution has low rank). Due to these two properties, conditional gradient methods have attracted much attention in the machine learning community in recent years, see \cite{Jaggi10, Jaggi13a, Jaggi13b, Dudik12a, Dudik12b, ShalevShwartz11, Laue12, Bach12}. 

It is known that in general the convergence rate $1/t$ is also optimal for this method without further assumptions, as shown in \cite{Clarkson, Hazan08, Jaggi13b}. In case the objective function is both smooth and strongly convex, there exist extensions of the basic method which achieve faster rates under various assumptions.  One such extension of the conditional-gradient algorithm with linear convergence rate was presented by Migdalas \cite{Migdalas}, however the algorithm requires to solve a regularized linear problem on each iteration which is computationally equivalent to computing projections. This is also the case with the algorithm for smooth and strongly convex optimization in the recent work of Lan \cite{Lan13}. In case the convex set is a polytope, Gu{\'{e}}Lat and Marcotte \cite{GueLat1986} has shown that the algorithm of Frank and Wolfe \cite{FrankWolfe} converges in linear rate assuming that the optimal point in the polytope is bounded away from the boundary. The convergence rate is proportional to a quadratic of the distance of the optimal point from the boundary. We note that in case the optimum lies in the interior of the convex domain, then the problem is in fact an unconstrained convex optimization problem and solvable via much more efficient methods. Gu{\'{e}}Lat and Marcotte \cite{GueLat1986} also gave an improved algorithm based on the concept of ``away steps" with a linear convergence rate that holds under weaker conditions, however this linear rate still depends on the location of the optimum with respect to the boundary of the set which may result in an arbitrarily bad convergence rate. We note that the suggestion of using ``away steps" to accelerate the convergence of the FW algorithm for strongly convex objectives was already made by Wolfe himself in \cite{Wolfe1970}. Beck and Taboule \cite{BeckTaboule} gave a linearly converging conditional gradient algorithm for solving convex linear systems, but as in \cite{GueLat1986}, their convergence rate depends on the distance of the optimum from the boundary of the set.
Here we emphasize that in this work we do not make any assumptions on the location of the optimum in the convex domain and our convergence rates are independent of it.

Ahipasaoglu, Sun and Todd \cite{Ahipasaoglu08} gave a variant of the conditional gradient algorithm with \textit{away steps} that achieves a linear convergence rate for the specific case in which the convex domain is the unit simplex. Their work also does not specify the precise dependency of the convergence rate on parameters of the problem such as the dimension, which is of great importance. In this work we derive, as an illustrating example, a linearly converging algorithm for the unit simplex. Our generalization to arbitrary polytopes is highly non-trivial and is indeed the technical heart of this work. We also provide convergence rates with detailed dependencies on natural parameters of the problem.

After our work first appeared \cite{Garber13b}, Jaggi and Lacoste-Julien \cite{Jaggi13c} presented a refined analysis of a variant of the conditional gradient algorithm with away steps from \cite{GueLat1986} that achieves a linear convergence rate without the assumption on the location of the optimum as in the original work of \cite{GueLat1986}. Their algorithm is also shown to be affine invariant. Their convergence rate however is not given explicitly and its dependency on the dimension or other natural parameters of the problem is not clear.

\paragraph*{Conditional gradient-like methods for online, stochastic and non-smooth optimization}
The two closest works to ours are those of Kalai and Vempala\cite{KalaiVempala}  and Hazan and Kale \cite{Hazan12}, both present projection-free algorithms for online convex optimization in which the only optimization carried out by the algorithms on each iteration is the minimization of a single linear objective over the decision set. \cite{KalaiVempala} gives a random algorithm for the online setting in the special case in which all loss functions are linear, also known as \textit{online linear optimization}. In this setting their algorithm achieves regret of $O(\sqrt{T})$ which is optimal \cite{CesaBianchi}. On iteration $t$ their algorithm plays a point in the decision set that minimizes the cumulative loss on all previous iterations plus a vector whose entries are independent random variables. The work of \cite{Hazan12} introduces algorithms for stochastic and online optimization which are based on ideas similar to ours - using the conditional gradient update step to approximate the steps a meta-algorithm for online convex optimization known as \textit{Regularized Follow the Leader} (RFTL) \cite{Hazan09, ShalevShwartz12}. For stochastic optimization, in case that all loss functions are smooth they achieve an optimal convergence rate of $1/\sqrt{T}$, however for non-smooth stochastic optimization they only get convergence rate of $T^{-1/3}$ and for the full adversarial setting of online convex optimization they get suboptimal regret that scales like $T^{3/4}$.

In a recent work, Lan \cite{Lan13} showed how to apply the conditional gradient algorithm to offline non-smooth optimization via a well known smoothing technique (also employed in \cite{Hazan12}). His analysis shows that an $\epsilon$ additive error is guaranteed after a total of $O(\epsilon^{-2})$ linear optimization steps over the domain and $O(\epsilon^{-4})$ calls to the subgradient oracle of the objective. Our algorithm for the non-smooth setting guarantees an $\epsilon$ additive error after $O(\epsilon^{-2})$ linear optimization steps over the domain and $O(\epsilon^{-2})$ calls to the subgradient oracle.

Also relevant to our work is the very recent work of Harchaoui, Juditsky and Nemirovski \cite{Harchaoui} who give methods for i) minimizing a norm over the intersection of a cone and the level set of a convex smooth function and ii) minimizing the sum of a convex smooth function and a multiple of a norm over a cone. Their algorithms are extensions of the conditional gradient method that assume the availability of a stronger oracle that can minimize a linear objective over the intersection of the cone and a unit ball induced by the norm of interest. They present several problems of interest for which such an oracle could be implemented very efficiently, however in general such an oracle could be computationally much less efficient than the linear oracle required by standard conditional gradient methods.

\subsection{Paper Structure} 
The rest of the paper is organized as follows. In section \ref{sec:prelim} we give preliminaries, including notation and definitions that will be used throughout this work, overview of the conditional gradient method and describe the settings of online convex optimization and stochastic optimization. In section \ref{sec:results} we give an informal statement of the results presented in this work. In section \ref{sec:offline_opt} we present our main result - a new linearly convergent conditional gradient algorithm for offline smooth and strongly convex optimization over polyhedral sets. In section \ref{sec:llo} we present and analyse our main new algorithmic machinery which we refer to as a \textit{local linear optimization oracle}. In section \ref{sec:online} we present and analyze our algorithms for online and stochastic optimization, and finally in section \ref{sec:lowerbounds} we discuss a lower bound for the problem of minimizing a smooth and strongly convex function using only linear optimization steps - showing that the oracle complexity of our new algorithm presented in section \ref{sec:offline_opt} is nearly optimal.

\section{Preliminaries}\label{sec:prelim}
We denote by $\ball_r(x)$ the euclidean ball of radius $r$ centred at $x$. We denote by $\Vert{x}\Vert$ the $\ell_2$ norm of the vector $x$ and by $\Vert{A}\Vert$ the \textit{spectral norm} of the matrix $A$, that is $\Vert{A}\Vert = \max_{x\in\ball}\Vert{Ax}\Vert$. Given a matrix $A$, we denote by $A(i)$ the vector that corresponds to the $i$th row of $A$.

\begin{definition}
We say that a function $f(x):\mathbb{R}^n\rightarrow\mathbb{R}$ is Lipschitz with parameter $L$ over the set $\mK$ if for all $x,y\in\mK$ it holds that
\begin{eqnarray*}
\vert{f(x) - f(y)}\vert \leq L\Vert{x-y}\Vert .
\end{eqnarray*}
\end{definition}

\begin{definition}
We say that a function $f(x):\mathbb{R}^n\rightarrow\mathbb{R}$ is $\beta$-smooth over the set $\mK$ if for all $x,y\in\mK$ it holds that
\begin{eqnarray*}
f(y) \leq f(x) + \nabla{}f(x)\cdot(y-x) + \frac{\beta}{2}\Vert{x-y}\Vert^2 .
\end{eqnarray*}
\end{definition}

\begin{definition}
We say that a function $f(x):\mathbb{R}^n\rightarrow\mathbb{R}$ is $\sigma$-strongly convex over the set $\mK$ if for all $x,y\in\mK$ it holds that
\begin{eqnarray*}
f(y) \geq f(x) + \nabla{}f(x)\cdot(y-x) + \frac{\sigma}{2}\Vert{x-y}\Vert^2 .
\end{eqnarray*}
\end{definition}

The above definition together with first order optimality conditions imply that for a $\sigma$-strongly convex $f$, if $x^*$ is the unique minimizer of $f$ over $\mK$, then for all $x\in\mK$ it holds that
\begin{eqnarray}\label{eq:strongconvexdist}
f(x) - f(x^*) \geq \frac{\sigma}{2}\Vert{x-x^*}\Vert^2 .
\end{eqnarray}

Note that a sufficient condition for a twice-differential function $f$ to be $\beta$-smooth and $\sigma$-strongly convex over a domain $\mathcal{K}$ is that
\begin{eqnarray*}
\forall{x\in\mathcal{K}}: \quad \beta\textrm{I} \succeq \nabla^2f(x) \succeq \sigma\textrm{I}.
\end{eqnarray*}

Let $\mP$ be a polytope described by linear equations and inequalities, i.e., $$\mP = \lbrace{x\in\mathbb{R}^n \, | \, A_1x = b_1, \, A_2x \leq b_2}\rbrace ,$$ where $A_2\in\mathbb{R}^{m\times{n}}$. We assume without loss of generality that all rows of $A_2$ are scaled to have unit $\ell_2$ norm.We denote by $\mathcal{V}(\mP)$ the set of vertices of $\mP$, and we let $N=\vert{\mathcal{V}}\vert$. 
We now define several geometric parameters of $\mP$ that will come up naturally in the analysis of our algorithms. We denote the Euclidean diameter of $\mP$ by $D(\mP)$, i.e., $D(\mP)= \max_{x,y\in\mP}\Vert{x-y}\Vert$. We denote $$\xi(P) = \min_{v\in\mathcal{V}(\mP)}\left({\min\{b_2(j)-A_2(j)\cdot{}v \, | \, j\in[m], A_2(j)\cdot{}v < b_2(j)\}}\right) .$$ That is, given an inequality constraint that defines the polytope and a vertex of the polytope, the vertex either satisfies the constraint with equality or is at least $\xi(P)$-far from satisfying it with equality. Let $r(A_2)$ denote the row-rank of the matrix $A_2$. Let $\mathbb{A}(\mP)$ denote the set of all $r(A_2)\times{n}$ matrices whose rows are linearly independent vectors chosen from the rows of $A_2$ and denote $\psi(\mP) = \max_{M\in\mathbb{A}(\mP)}\Vert{M}\Vert$. Finally denote $\mu(\mP) = \frac{\psi(\mP)D(\mP)}{\xi(\mP)}$. It is important to note that the quantity $\mu(\mP)$ is invariant to translation, rotation and scaling of the polytope $\mP$. Note also that it always holds that $\mu(\mP) \geq 1$ (this follows since by definition $\psi(\mP) \geq 1$ and $\xi(\mP) \leq D(\mP)$).
Henceforth we shall use the shorthand notation of $\mathcal{V}, D, \xi, \psi, \mu$ when the polytope considered is clear from context.
Note that in many settings of interest (problems for which there is indeed an highly-efficient algorithm for linear optimization over the specified polytope), estimating the parameters $\xi, \psi$ is straightforward. For instance, in convex domains that arise in combinatorial optimization such at the flow polytope, matching polytope, matroid polytopes, etc.

Throughout this work we will assume that we have access to an oracle that returns a vertex of $\mP$ that minimizes the dot product with a given linear objective. That is we are given a procedure $\oraclep:\mathcal{V}\rightarrow\mathbb{R}$ such that for all $c\in\mathbb{R}^n$, $\oraclep(c)\in\arg\min_{v\in\mathcal{V}}v\cdot{}c$. We call $\oraclep$ a linear optimization oracle.

\subsection{The Conditional Gradient Method and Local Linear Optimization Oracles}\label{subsec:cond_grad_intro}
The conditional gradient method is a simple algorithm for minimizing a smooth and convex function $f$ over a convex set $\mP$ - which in this work we assume to be a polytope. The appeal of the method is that it is a first order \textit{feasible point} method, i.e., the iterates always lie inside the convex set and thus no projections are needed. Further more, the update step on each iteration simply requires to minimize a linear objective over the set. The basic algorithm is given below.

\begin{algorithm}[H]
\caption{Conditional Gradient}
\label{alg:condgrad}
\begin{algorithmic}[1]
\STATE Input: sequence of step sizes $\lbrace{\alpha_t}\rbrace_{t=1}^{\infty}\subseteq[0,1]$
\STATE Let $x_1$ be an arbitrary point in $\mP$.
\FOR{$t = 1,2,...$}
\STATE $p_{t} \gets \oraclep(\nabla{}f(x_t))$
\STATE $x_{t+1} \gets x_t + \alpha_t(p_t - x_t)$ for $\alpha_t\in[0,1]$
\ENDFOR
\end{algorithmic}
\end{algorithm}

Let $x^*$ denote the unique minimizer of $f$ over $\mP$ that is, $x^* = \arg\min_{x\in\mK}f(x)$. The convergence of algorithm \ref{alg:condgrad} is due to the following simple observations.
\begin{eqnarray}\label{old_fw_anal}
&  f(x_{t+1}) - f(x^*) \\
&= f(x_t + \alpha_t(p_t - x_t)) - f(x^*) \nonumber\\
&\leq  f(x_t) - f(x^*) + \alpha_t(p_t-x_t)\cdot\nabla{}f(x_t) + \frac{\alpha_t^2\beta}{2}\Vert{p_t-x_t}\Vert^2 & \quad  \textrm{/ $\beta$-smoothness of $f$} \nonumber \\
&\leq  f(x_t) - f(x^*) + \alpha_t(x^*-x_t)\cdot\nabla{}f(x_t) + \frac{\alpha_t^2\beta}{2}\Vert{p_t-x_t}\Vert^2 & \quad \textrm{/ optimality of $p_t$} \nonumber \\
&\leq  f(x_t) - f(x^*) + \alpha_t(f(x^*)-f(x_t)) + \frac{\alpha_t^2\beta}{2}\Vert{p_t-x_t}\Vert^2 & \quad \textrm{/ convexity of $f$} . \nonumber 
\end{eqnarray}
Thus for an appropriate choice for the sequence of step sizes $\lbrace{\alpha_t}\rbrace_{t=1}^{\infty}$, the approximation error strictly decreases on each iteration. This leads to the following theorem (for a proof see for instance the modern survey of \cite{Jaggi13b}).
\begin{theorem}
There is an explicit choice for the sequence of step sizes $\lbrace{\alpha_t}\rbrace_{t=1}^{\infty}$ such that for every $t\geq 2$, the iterate $x_t$ of Algorithm \ref{alg:condgrad} satisfies that $f(x_t) - f(x^*) = O\left({\frac{\beta{}D^2}{t-1}}\right)$.
\end{theorem}

The relatively slow convergence of the conditional gradient algorithm is due to the term $\Vert{p_t-x_t}\Vert$ in Eq. \eqref{old_fw_anal}, that may remain as large as the diameter of $\mP$ while the term $f(x_t)-f(x^*)$ keeps on shrinking, that forces choosing values of $\alpha_t$ that decrease like $\frac{1}{t}$ in order to guarantee convergence \cite{Clarkson, Hazan08, Jaggi13b}.

Notice that if $f(x)$ is $\sigma$-strongly convex for some $\sigma>0$ then according to Eq. \eqref{eq:strongconvexdist},
knowing that for some iteration $t$ it holds that $f(x_t)-f(x^*) \leq \epsilon$, implies that $\Vert{x_t-x^*}\Vert^2 \leq \frac{2\epsilon}{\sigma}$. Thus when choosing the point $p_t$, denoting $r=\sqrt{2\epsilon/\sigma}$, it is enough to consider points that lie in the intersection set  $\mP\cap\ball_r(x_t)$, i.e., take $p_t$ to be the solution to the optimization problem
\begin{eqnarray}\label{eq:locallinearopt}
\min_{p\in\mP\cap\ball_r(x_t)}p\cdot\nabla{}f(x_t) .
\end{eqnarray}

In this case the term $\Vert{p_t-x_t}\Vert^2$ in Eq. \eqref{old_fw_anal} will be of the same magnitude as $f(x_t)-f(x^*)$ (or even smaller) and as observable in Eq. \eqref{old_fw_anal}, a linear convergence rate will follow.

However, solving Problem \eqref{eq:locallinearopt} is potentially much more difficult than solving the original linear problem $\min_{p\in\mP}p\cdot\nabla{}f(x_t)$, and is not straight-forward solvable using the linear optimization oracle of $\mP$. 

To overcome the problem of solving the linear problem in the intersection $\mP\cap\ball_r(x_t)$ we introduce the following definition which is a primary ingredient of our work.

\begin{definition}[Local Linear Optimization Oracle]
We say that a procedure $\mathcal{A}(x,r,c)$, where $x\in\mP$, $r\in\mathbb{R}^+$, $c\in\mathbb{R}^n$, is a Local Linear Optimization Oracle with parameter $\rho \geq 1$ for the polytope $\mP$, if $\mathcal{A}(x,r,c)$ returns a feasible point $p\in\mP$ such that:
\begin{enumerate}
\item $\forall{}y\in\ball(x,r)\cap\mP$ it holds that $y \cdot c \geq p \cdot c$.
\item $\Vert{x-p}\Vert \leq \rho\cdot{}r$.
\end{enumerate}
\end{definition}

The local linear optimization oracle (LLOO) relaxes Problem \eqref{eq:locallinearopt} by solving the linear problem on a larger set, but one that still has a diameter that is not much larger than $\sqrt{f(x_t)-f(x^*)}$. Our main contribution is in showing that for a polytope $\mP$, a LLOO can be constructed such that the parameter $\rho$ depends only on the dimension $n$ and the quantity $\mu(\mP)$. Moreover, the algorithmic construction requires only a single call to the original linear optimization oracle $\oraclep$. Hence, the complexity per iteration, in terms of the number of calls to the linear optimization oracle $\oraclep$, remains the same as the original conditional gradient algorithm (Algorithm \ref{alg:condgrad}).

\subsection{Online convex optimization and its application to stochastic and offline optimization}
The problem of online convex optimization (OCO) \cite{Zinkevich03, Hazan07, Hazan09} takes the form of the following repeated game. A decision maker is required on each iteration $t$ of the game to choose a point $x_t\in\mK$, where $\mK$ is a fixed convex set. After choosing the point $x_t$, a convex loss function $f_t(x)$ is reveled, and the decision maker incurs loss $f_t(x_t)$. The emphasis in this model is that the loss function on time $t$ may be chosen completely arbitrarily and even in an adversarial manner given the current and past decisions of the decision maker. In the \textit{full information} setting, after making his decision on time $t$, the decision maker gets full knowledge of the function $f_t$. In the \textit{partial information} setting (\textit{bandit}) the decision maker only learns the value $f_t(x_t)$ and does not gain any other knowledge about $f_t$.

The standard goal in this setting is to have overall loss which is not much larger than that of the best fixed point in $\mK$, in hindsight. Formally the goal is to minimize a quantity known has \textit{regret} which is given by
\begin{eqnarray*}
\textrm{regret}_T := \sum_{t=1}^Tf_t(x_t) - \min_{x\in\mK}\sum_{t=1}^Tf_t(x) .
\end{eqnarray*}

In certain cases, such as in the bandit setting, the decision maker must use randomness in order to make his decisions. In this case we consider the expected regret, where the expectation is taken over the randomness in the algorithm of the decision maker.

In the \textit{full information setting} and for general convex losses the optimal regret bound attainable scales like $\sqrt{T}$ \cite{CesaBianchi} where $T$ is the length of the game. In the case that all loss functions are strongly convex, the optimal regret bound attainable scales like $\log(T)$ \cite{Hazan11}.

\subsubsection{Algorithms for OCO}\label{sec:RFTL}

A simple algorithm that attains optimal regret of $O(\sqrt{T})$ for general convex losses is known as the Regularized Follows The Leader algorithm (RFTL) \cite{Hazan09}. On time $t$ the algorithm predicts according to the following rule.
\begin{eqnarray}\label{rftl_update}
x_t \gets \arg\min_{x\in\mK} \left\{ \eta\sum_{\tau=1}^{t-1}\nabla{}f_{\tau}(x_{\tau})\cdot x + \R(x) \right\} .
\end{eqnarray}
Where $\eta$ is a parameter known as the learning rate and $\R$ is a strongly convex function known as the regularization.
From an offline optimization point of view, achieving low regret is thus equivalent to minimizing a single strongly-convex objective over the feasible set per iteration.  In fact, with the popular choice $\R(x) = \Vert{x}\Vert^2$, we get that Problem \eqref{rftl_update} is just the minimization of a function that is both smooth and strongly-convex over the feasible domain $\mK$, and is in fact equivalent to computing an Euclidean projection onto $\mK$.

In case of strongly-convex losses a slight variant of Eq. \eqref{rftl_update}, which also takes the form of minimizing a smooth and strongly convex function when choosing $\R(x)=\Vert{x}\Vert^2$,  guarantees optimal $O(\log(T))$ regret.

In the \textit{partial information setting} the RFTL rule \eqref{rftl_update} with the algorithmic conversion of the \textit{bandit} problem to that of the \textit{full information} problem established in \cite{FKM05}, yields an algorithm with regret $O(T^{3/4})$, which is the best to date.

Our algorithms for online optimization are based on iteratively approximating the RFTL objective in Eq. \eqref{rftl_update} using our new linearly convergent CG algorithm for smooth and strongly convex optimization, thus replacing the projection step in \eqref{rftl_update} (in case $\R(x)=\Vert{x}\Vert^2$) with a single linear optimization step over the domain. 

We note that while the update rule in Eq. \eqref{rftl_update} uses the gradients of the loss functions which are denoted by $\nabla{}f_{\tau}$, it is in fact not required to assume that the loss functions are differentiable everywhere in the domain. It suffices to assume that the loss functions only have a sub-gradient everywhere in the domain, making the algorithm suitable also for non-smooth settings. Throughout this work we do not  differentiate between these two cases and the notation $\nabla{}f(x)$ should be understood as a gradient of $f$ at the point $x$ in case $f$ is differentiable and as a sub-gradient of $f$ in case $f$ only has a sub-gradient in this point.

\subsubsection{Stochastic optimization}\label{sec:stoc_opt}
In stochastic optimization the goal is to minimize a convex function $F(x)$ given by
\begin{eqnarray*}
F(x) = \E_{f\sim\mD}[f(x)],
\end{eqnarray*}
where $\mD$ is a fixed, yet unknown distribution over convex functions.
In this setting we don't have direct access to the function $F$, instead we assume to have a stochastic oracle for $F$ that when queried, returns a function $f$ sampled from $\mD$, independently of previous samples. 

The general setting of online convex optimization is harder than stochastic optimization in the sense that an algorithm for OCO could be directly applied to stochastic optimization as follows.  We simulate an online game of $T$ rounds for the OCO algorithm, where
on each iteration $t$ the loss function $f_t(x)$ is generated by a query to the stochastic oracle of $\mD$. 
Let us denote by $\regret_T$ an upper bound on the regret of the online algorithm with respect to \textit{any} sample of $T$ functions from the distribution $\mD$. Thus, given such a sample - $\{f_t\}_{t=1}^T$, it holds that
\begin{eqnarray*}
\sum_{t=1}^Tf_t(x_t) - \min_{x\in\mK}\sum_{t=1}^Tf_t(x) \leq \regret_T.
\end{eqnarray*}

Denoting $x^* \in \arg\min_{x\in\mK}F(x)$ we thus in particular have that
\begin{eqnarray*}
\sum_{t=1}^Tf_t(x_t) - \sum_{t=1}^Tf_t(x^*) \leq \regret_T .
\end{eqnarray*} 

Since for all $t\in[T]$ it holds that $\E[f_t(x_t) | x_t] = F(x_t)$ and $\E[f_t(x^*)]=F(x^*)$ (where in both cases the expectation is with respect to the random choice of $f_t$), taking expectation over the randomness of the oracle for $F$ we have that
\begin{eqnarray*}
\E\left[{\sum_{t=1}^Tf_t(x_t) - \sum_{t=1}^Tf_t(x^*)}\right]
 &=&\sum_{t=1}^T\E\left[{\E[f_t(x_t) | x_t ]}\right] - T\cdot{}F(x^*)  
 \\
 &=&\sum_{t=1}^T\E[F(x_t)] - T\cdot{}F(x^*)\\
 &=& \E\left[{\sum_{t=1}^TF(x_t)}\right] - T\cdot{}F(x^*) .
\end{eqnarray*} 

Denoting $\bar{x} = \frac{1}{T}\sum_{t=1}^Tx_t$ we have by convexity of $F$ that
\begin{eqnarray*}
\E[F(\bar{x})] - F(x^*) \leq \frac{\regret_T}{T} .
\end{eqnarray*} 

Thus the same regret rates that are attainable for online convex optimization hold as convergence rates, or sample complexity, for stochastic convex optimization. 
We note that using standard concentration results for martingales, one can also derive error bounds that hold with high probability and not only in expectation, but these are beyond the scope of this paper. We refer the interested reader to \cite{CesaBianchi04} for more details. 

\subsubsection{Non-smooth optimization}\label{sec:nonsmooth_opt}
As in stochastic optimization (see previous subsection), an algorithm for OCO also implies an algorithm for offline convex optimization. Thus a conditional gradient-like algorithm for OCO implies a conditional gradient-like algorithm for non-smooth convex optimization. This is in contrast to the original conditional gradient method which is suitable for smooth optimization only.

Applying an OCO algorithm to the minimization of a, potentially non-smooth, convex function $f(x)$ over a feasible convex set $\mK$, is as follows.  As in the previous subsection, we simulate a game of length $T$ for the OCO algorithm in which the loss function $f_t$ on each round is just the function to minimize $f(x)$. As in the stochastic case, denoting $\bar{x} = \frac{1}{T}\sum_{t=1}^Tx_t$, i.e., the average of iterates returned by the online algorithm, we have that
\begin{eqnarray*}
f(\bar{x}) -f(x^*) \leq \frac{1}{T}\sum_{t=1}^Tf(x_t) - f(x^*) = \frac{1}{T}\left({\sum_{t=1}^Tf_t(x_t) - f_t(x^*)}\right) = \frac{\regret_T}{T},
\end{eqnarray*}
where the first inequality follows from convexity of $f$. Hence the regret bound immediately translates to a convergence rate for offline optimization problem.

\section{Our Results}\label{sec:results}
In this section we give an informal presentation of the results presented in this paper. In all of the following results we consider optimization (either offline or online) over a polytope, denoted $\mP$, and we assume the availability an oracle $\oraclep$ that given a linear objective $c\in\mathbb{R}^n$ returns a vertex of $\mP$, $v\in\mathcal{V}$ that minimizes the dot product with $c$ over $\mP$.

\paragraph*{Offline smooth and strongly convex optimization}
Given a $\beta$-smooth, $\sigma$-strongly convex function $f(x)$ we present an iterative algorithm that after $t$ iterations returns a point $x_{t+1}\in\mP$ such that 
\begin{eqnarray}
f(x_{t+1})-f(x^*) \leq C\exp\left({-\frac{\sigma}{4\beta{}n\mu^2}t}\right),
\end{eqnarray}
where $x^*=\arg\min_{x\in\mP}f(x)$ and $C$ satisfies that $C \geq f(x_1) - f(x^*)$. 
Each iteration is comprised of a single call to the linear optimization oracle of $\mP$ and a single evaluation of a gradient vector of $f$. 

As we show in section \ref{sec:lowerbounds}, the above convergence rate is nearly tight in certain settings for a conditional gradient-like method.

\paragraph*{Online convex optimization}
We present algorithms for OCO that require only a single call to the linear optimization oracle  of $\mP$ per iteration of the game. In the following we let $G$ denote an upper bound on the $\ell_2$ norm of the (sub)gradients of the loss functions revealed throughout the game. Our results for the online setting are as follows:
\begin{enumerate}
\item An algorithm for OCO with arbitrary convex loss functions whose sequence of predictions - $\lbrace{x_t}\rbrace_{t=1}^T$ satisfies that
\begin{eqnarray}
\sum_{t=1}^T f_t(x_t) - \min_{x \in \mP} \sum_{t=1}^T f_t(x) = O\left({GD\mu\sqrt{nT}}\right) .
\end{eqnarray}

This bound is optimal in terms of $T$ \cite{CesaBianchi}.

\item An algorithm for OCO with $\sigma$-strongly convex loss functions 
whose sequence of predictions - $\lbrace{x_t}\rbrace_{t=1}^T$ satisfies that
\begin{eqnarray}
\sum_{t=1}^T f_t(x_t) - \min_{x\in \mP} \sum_{t=1}^T f_t(x) = O\left(\sigma{}D^2\rho^4 + {\frac{(G+\sigma{}D)^2n\mu^2}{\sigma}\log(T)}\right) .
\end{eqnarray}

This bound is also optimal in terms of $T$ \cite{Hazan11}.

\item A randomized algorithm for the \textit{partial information} setting whose sequence of predictions - $\lbrace{x_t}\rbrace_{t=1}^T$ satisfies that
\begin{eqnarray}
\mathbb{E}\left[{\sum_{t=1}^Tf_t(x_t) - \min_{x\in\mP}\sum_{t=1}^Tf_t(x)}\right] = O\left({GD\sqrt{\frac{nD}{r_0}}T^{3/4}  + GD\mu\sqrt{nT}}\right) .
\end{eqnarray}

Here we assume for simplicity that $\mP$ is full-dimensional and we denote by $r_0$ the size of the largest Euclidean ball enclosed in it.
This bound matches the current state-of-the-art in this setting in terms of $T$ \cite{FKM05}.
\end{enumerate} 

\paragraph*{Stochastic and non-smooth optimization}
Applying our online algorithms to the stochastic setting, as specified in Subsection \ref{sec:stoc_opt}, yields algorithms that given a stochastic oracle for a function of the form - $F(x) = \mathbb{E}_{f\sim\mathcal{D}}[f(x)]$, and after viewing $T$ i.i.d samples from the distribution $\mathcal{D}$ and making $T$ calls to the linear optimization oracle $\oraclep$, return a point $\bar{x}=\frac{1}{T}\sum_{t=1}^Tx_t$ such that the following guarantees hold:
\begin{enumerate}
\item If $\mathcal{D}$ is a distribution over arbitrary convex functions then 
\begin{eqnarray}
\E[F(\bar{x})] - \min_{x\in\mP}F(x) = O\left({\frac{GD\sqrt{n}\mu}{\sqrt{T}}}\right) .
\end{eqnarray}

\item If $\mathcal{D}$ is a distribution over $\sigma$-strongly convex functions then 
\begin{eqnarray}
\E[F(\bar{x})] - \min_{x\in\mP}F(x) = O\left({\frac{\sigma^2D^2\rho^4 + (G+\sigma{}D)^2n\mu^2\log(T)}{\sigma{}T}}\right) .
\end{eqnarray}
 
\end{enumerate}

Here again $G$ denotes an upper bound on the $\ell_2$ norm of the (sub)gradients of the functions $f$ sampled from the distribution $\mathcal{D}$.

As described in Subsection \ref{sec:nonsmooth_opt}, the above rates (without the expectation) hold also for non-smooth convex and strongly convex optimization. 

\section{A Linearly Convergent Conditional Gradient Algorithm for Smooth and Strongly Convex Optimization over Polyhedral Sets}\label{sec:offline_opt}
In this section we consider the following offline optimization problem.
\begin{eqnarray}\label{prob:offline}
\min_{x\in\mP}f(x) ,
\end{eqnarray}
where we assume that $f$ is $\beta$-smooth and $\sigma$-strongly convex, and $\mP$ is a polytope. We further assume that we have a LLOO oracle for $\mP$ - $\mathcal{A}(x,r,c)$, as defined in Subsection \ref{subsec:cond_grad_intro} . In section \ref{sec:llo} we show that given an oracle for linear minimization over $\mP$ , such a LLOO oracle could be efficiently constructed.

The algorithm is given below.

\begin{algorithm}[H]
\caption{LLOO-based Convex Optimization}
\label{alg:offline}
\begin{algorithmic}[1]
\STATE Input: $\mathcal{A}(x,r,c)$ - LLOO with parameter $\rho \geq 1$ for polytope $\mP$
\STATE Let $x_1$ be an arbitrary vertex of $\mP$ and let $C \geq f(x_1) - f(x^*)$
\STATE $\alpha \gets \frac{\sigma{}}{2\beta{}\rho^2}$
\FOR{$t = 1,2,...$}
\STATE $r_t \gets \sqrt{\frac{2C}{\sigma}e^{-\frac{\alpha}{2}(t-1)}}$
\STATE $p_{t} \gets \mathcal{A}(x_t,r_t, \nabla{}f(x_t))$
\STATE $x_{t+1} \gets x_t + \alpha(p_t - x_t)$
\ENDFOR
\end{algorithmic}
\end{algorithm}

\begin{theorem}\label{thm:offthm}
Algorithm \ref{alg:offline}, instanciated with the LLOO implementation given in Algorithm \ref{alg:llo} (for which $\rho = \sqrt{n}\mu$, see Section \ref{sec:llo}), satisfies that
for each $t\geq 1$, the iterate $x_{t+1}$ is feasible ($x_{t+1}\in\mP$) and $$f(x_{t+1})-f(x^*) \leq C\exp\left({-\frac{\sigma}{4\beta{}n\mu^2}t}\right) ,$$ where $x^*=\arg\min_{x\in\mP}f(x)$.  Furthermore, after $t$ iterations the algorithm has made a total of $t$ calls to the linear optimization oracle of $\mP$ and $t$ gradient vector evaluations of $f(x)$. 
\end{theorem}

The theorem is a consequence of the following Lemma \ref{lem:offconv} and Lemma \ref{lem:oracle} (see Section \ref{sec:llo}). Lemma \ref{lem:offconv} proves the convergence rate of the algorithm given a black-box access to a LLOO with some arbitrary parameter $\rho$. Lemma \ref{lem:oracle} then gives an explicit construction of a LLOO with parameter $\rho = \sqrt{n}\mu$ that requires only a single call to the linear optimization oracle  per invocation.

We now turn to analyze the convergence rate of Algorithm \ref{alg:offline}. The following lemma is of general interest and will be also used in the section on online optimization.

\begin{lemma}\label{lem:fw}
Assume that $f(x)$ is $\beta$-smooth and let $x^*\in\arg\min_{x\in\mP}f(x)$. Assume that on iteration $t$ it holds that $\Vert{x_t-x^*}\Vert \leq r_t$, and let $x_{t+1} \gets x_t + \alpha(p_t-x_t)$, where $p_t$ is the output of a LLOO with parameter $\rho$ with respect to the input $(x_t,r_t,\nabla{}f(x_t))$, and let $\alpha\in[0,1]$. Then it holds that
\begin{eqnarray*}
f(x_{t+1}) - f(x^*) \leq (1-\alpha)\left({f(x_t) - f(x^*)}\right) + \frac{\beta}{2}\alpha^2\min\lbrace{\rho{}^2r_t^2, D^2}\rbrace .
\end{eqnarray*}
\end{lemma}

\begin{proof}
By the $\beta$-smoothness of $f(x)$ and the definition of $x_{t+1}$ we have that
\begin{eqnarray*}
f(x_{t+1}) &=& f(x_t + \alpha(p_t - x_t)) \leq f(x_t) + \alpha(p_t - x_t)\cdot\nabla{}f(x_t) + \frac{\beta}{2}\alpha^2\Vert{p_t - x_t}\Vert^2  .
\end{eqnarray*}
Since $\Vert{x_t - x^*}\Vert \leq r_t$, by the definition of the oracle $\mathcal{A}$ it holds that i) $p_t\cdot\nabla{}f(x_t) \leq {x^*}\cdot\nabla{}f_t(x_t)$ and ii) $\Vert{x_t - p_t}\Vert \leq \min\lbrace{\rho{}r_t, D}\rbrace$. Thus we have that
\begin{eqnarray*}
f(x_{t+1}) & \leq & f(x_t) + \alpha(x^* - x_t)\cdot\nabla{}f(x_t) + \frac{\beta}{2}\alpha^2\min\lbrace{\rho{}^2r_t^2, D^2}\rbrace .
\end{eqnarray*}
Using the convexity of $f(x)$ and subtracting $f(x^*)$ from both sides we have,
\begin{eqnarray*}
f(x_{t+1}) - f(x^*) \leq (1-\alpha)\left({f(x_t) - f(x^*)}\right) + \frac{\beta}{2}\alpha^2\min\lbrace{\rho{}^2r_t^2, D^2}\rbrace .
\end{eqnarray*}
\end{proof}

\begin{lemma}\label{lem:offconv}[Convergence of Algorithm \ref{alg:offline}]
Denote $h_t = f(x^*) - f(x_t)$. Then for all $t\geq 1$ it holds that
\begin{eqnarray*}
h_t \leq Ce^{-\frac{\sigma}{4\beta\rho^2}(t-1)} .
\end{eqnarray*}
\end{lemma}

\begin{proof}
The proof is by a simple induction. For $t=1$ we have by definition that $h_1 = f(x^*)-f(x_1) \leq C$.

Now assume that the lemma holds for $t\geq 1$. This implies via the the strong convexity of $f(x)$ (see Eq. \ref{eq:strongconvexdist}) that
\begin{eqnarray*}
\Vert{x_t-x^*}\Vert^2 \leq \frac{2}{\sigma}h_t \leq \frac{2C}{\sigma}e^{-\frac{\sigma}{4\beta\rho^2}(t-1)},
\end{eqnarray*}
where the second inequality follows from the induction hypothesis.

Thus, for $r_t = \sqrt{\frac{2C}{\sigma}e^{-\frac{\sigma}{4\beta\rho^2}(t-1)}}$ we have that
$x^* \in \mP\cap{}\ball_{r_t}(x_t)$.
Applying Lemma \ref{lem:fw} with respect to $x_t, r_t$ and using the induction hypothesis we have that
\begin{eqnarray*}
h_{t+1} & \leq & (1-\alpha)h_t + \frac{\beta}{2}\alpha^2\min\lbrace{\rho{}^2r_t^2, D^2}\rbrace\\
& \leq & (1-\alpha)Ce^{-\frac{\sigma}{4\beta\rho^2}(t-1)} + \frac{\alpha^2\beta\rho^2}{
\sigma} Ce^{-\frac{\sigma}{4\beta\rho^2}(t-1)}\\
& =& Ce^{-\frac{\sigma}{4\beta\rho^2}(t-1)}(1 - \alpha + \frac{\alpha^2\beta\rho^2}{
\sigma}) .
\end{eqnarray*}
By plugging the value of $\alpha$ from Algorithm \ref{alg:offline} and using $(1-x) \leq e^{-x}$ we have that
\begin{eqnarray*}
h_{t+1} \leq Ce^{-\frac{\sigma}{4\beta\rho^2}t} .
\end{eqnarray*}
\end{proof}

\section{Construction of a Local Linear Optimization Oracle}\label{sec:llo}

In this section we present an efficient construction of a Local Linear Optimization Oracle for a polytope $\mP$, given only an oracle for minimizing a linear objective over $\mP$.  

As an exposition for our construction for arbitrary polytopes, we first consider the specific case of constructing a LLOO for the probabilistic simplex in $\reals^n$, that is the set $\mS_n = \lbrace{x\in\mathbb{R}^n \, |\, \forall{i\in[n]}:\, x_i \geq 0 \, , \, \sum_{i=1}^nx_i =1}\rbrace$. Then we show how to generalize the simplex case to an arbitrary polytope.

\subsection{Construction of a Local Linear Optimization Oracle for the Probabalistic Simplex}

The following lemma shows that in the case of the probabilistic simplex, an LLOO could be implemented by minimizing a linear objective over the intersection of the simplex and an $\ell_1$ ball. We then show that this problem could be solved optimally by minimizing a single linear objective over the simplex (without the additional $\ell_1$ constraint).

\begin{lemma}\label{lem:simplex_lloo}
Given a point $x\in\mS_n$, a radius $r>0$ and a linear objective $c\in\mathbb{R}^n$, consider the optimization problem
\begin{eqnarray}\label{eq:llo_simplex}
&&\min_{y\in\mS_n}y \cdot c  \nonumber \\
\textrm{s.t. } &&\Vert{x-y}\Vert_1 \leq d ,
\end{eqnarray}
for some $d > 0$.
Let us denote by $p^*$ an optimal solution to Problem \eqref{eq:llo_simplex} when we set $d=\sqrt{n}r$. Then $p^*$ is the output of a LLOO with parameter $\rho =\sqrt{n}$ for $\mS_n$. That is,
\begin{enumerate}
\item $\forall{}y\in\mS_n\cap\ball_r(x):\, p^* \cdot c \leq y \cdot c$.
\item $\Vert{x-p^*}\Vert \leq \sqrt{n}r$.
\end{enumerate}
\end{lemma}

\begin{proof}
The proof follows since for any $x,y\in\reals^n$ it holds that $\frac{1}{\sqrt{n}}\Vert{x-y}\Vert_1 \leq \Vert{x-y}\Vert \leq \Vert{x-y}\Vert_1$. 
\end{proof}

Problem \eqref{eq:llo_simplex} with parameter $d=\sqrt{n}r$ is solved optimally by the following simple algorithm. 
\begin{algorithm}[H]
\caption{Local Linear Optimization Oracle for the Simplex}
\label{alg:llosimplex}
\begin{algorithmic}[1]
\STATE Input: point $x\in\mS_n$, radius $r > 0$, linear objective $c\in\mathbb{R}^n$
\STATE $d \gets \sqrt{n}r$
\STATE $\Delta \gets \min\{d/2, 1\}$
\STATE $i^* \gets \arg\min_{i\in[n]}c(i)$
\STATE $p_+ \gets \Delta\cdot e_{i^*}$ 
\STATE $p_- \gets \vec{0}$
\STATE Let $i_1,...,i_n$ be a permutation over $[n]$ such that $c(i_1) \geq c(i_2) \geq ... \geq c(i_n)$
\STATE Let $k\in[n]$ be the smallest integer such that $\sum_{j=1}^kx(i_j) \geq \Delta$ 
\STATE $\forall{j\in[k-1]}: $ $p_-(i_j) \gets x(i_j)$
\STATE $p_-(i_k) \gets \Delta - \sum_{j=1}^{k-1}x(i_j)$
\RETURN $p \gets x + p_+ - p_-$
\end{algorithmic}
\end{algorithm}

The algorithm basically modifies the input point $x$ by moving the largest amount of mass which will not violate the constraint $\Vert{x-p}\Vert_1 \leq d$ from the entries that correspond to the largest (signed) entries in the objective $c$ to the single entry that corresponds to the smallest (signed) entry in the objective $c$.

In Algorithm \ref{alg:llosimplex}, we fix the value of $d$ to $\sqrt{n}r$ to correspond to Lemma \ref{lem:simplex_lloo}. However, as the following lemma shows, the algorithm finds an optimal solution to Problem \eqref{eq:llo_simplex} for any $d \geq 0$.

\begin{lemma}\label{lem:simplex_opt}
Fix $d \geq 0$. Algorithm \ref{alg:llosimplex} finds an optimal solution to Problem \eqref{eq:llo_simplex} with parameter $d$.
\end{lemma}
\begin{proof}
Fix an optimal solution $p^*$ to Problem \eqref{eq:llo_simplex}. We can write $p^*$ in the following way:
\begin{eqnarray}\label{eq1}
p^* = x - p_{-} + p_{+}, 
\end{eqnarray}
where $p_{-},p_{+}$ are non-negative. Note that without loss of generality we can assume that $p_-,p_+$ are orthogonal. To see this, assume that there exists an entry $i$ such that $p_-(i)>0$ and $p_+(i) > 0$. By replacing $p_-,p_+$ with $p_- - \min\{p_-(i),p_+(i)\}e_i$ and $p_+ - \min\{p_-(i),p_+(i)\}e_i$ respectively, where $e_i$ is the $i$th standard basis vector in $\reals^n$, we have that the new vectors still satisfy Eq. \eqref{eq1}, both are non-negative but now at least one of them has a value of $0$ in the $i$th entry. By repeating this process for every entry $i$ that is non-zero in both vectors we can make them orthogonal. As a result, it must hold that $x \geq p_-$ (otherwise $p^*$ is not be feasible).
Furthermore, since $p^*$ is feasible ($\Vert{p^*}\Vert_1 = 1$), it must hold that $\Vert{p_+}\Vert_1 = \Vert{p_-}\Vert_1$.

Denote $\Delta =  \min\{d/2,1\}$ and assume now that $\Vert{p_+}\Vert_1 < \Delta$ (i.e. the $\ell_1$ constraint in Problem \eqref{eq:llo_simplex} is not tight for $p^*$), and denote $w = x - p_-$. It follows that there must exist a vector $y \geq 0$ such that $\Vert{y}\Vert_1 = \Delta - \Vert{p_+}\Vert_1$ and $w \geq y$.
Now define
\begin{eqnarray*}
\tilde{p}_- := p_- + y, \qquad \tilde{p}_+ := p_+ + y .
\end{eqnarray*}
Note that it holds that $p^* = x - \tilde{p}_- + \tilde{p}_+$, $\Vert{\tilde{p}_+}\Vert_1 = \Vert{\tilde{p}_-}\Vert_1= \Delta$ and that $x \geq \tilde{p}_-$ (although $\tilde{p}_-, \tilde{p}_+$ are no longer orthogonal).


Thus we have that
\begin{eqnarray*}
\tilde{p}_+ \in \Delta\cdot\mS_n ,\qquad \tilde{p}_- \in \left({\Delta\cdot\mS_n}\right)\cap\{z\in\reals^n \, | \, z \leq x\} .
\end{eqnarray*}

Note also that for any $p_1\in\Delta\cdot\mS_n$ and $p_2\in\left({\Delta\cdot\mS_n}\right)\cap\{z\in\reals^n \, | \, z \leq x\}$ it holds that $x+p_1-p_2$ is a feasible solution to Problem \eqref{eq:llo_simplex}.

Now we can write
\begin{eqnarray}\label{eq2} 
(p^*-x) \cdot c &=& \tilde{p}_+\cdot c - \tilde{p}_-\cdot c \geq  \min_{p_1\in\Delta\cdot \mS_n}p_1\cdot{}c - \max_{p_2\in\Delta \cdot \mS_n \, : \, p_2\leq x}p_2\cdot{}c .
\end{eqnarray}

It is now a simple observation that the vectors $p_+,p_-$ computed in Algorithm \ref{alg:llosimplex} are exactly solutions to the optimization problems 
$$\min_{p_1\in\Delta\cdot \mS_n}p_1\cdot{}c , \qquad \max_{p_2\in\Delta \cdot \mS_n \, : \, p_2\leq x}p_2\cdot{}c$$ respectively. Hence the lemma follows.

\end{proof}

Two important observations regarding the implementation of Algorithm \ref{alg:llosimplex} are that i) the running time of the algorithm does not explicitly depends on the dimension $n$ but rather on the number of non-zero entries in $x$ and the time to compute the index $i^*$ and ii) computing the index $i^*$ is equivalent to finding a vertex of $\mS_n$ that minimizes the dot product with the objective $c$, and hence is equivalent to a single call to the linear optimization oracle of $\mS_n$. 

\subsection{Construction of a Local Linear Optimization Oracle for an Arbitrary Polytope}

We now turn to generalize the above simple construction for the simplex to an arbitrary polytope $\mP$. A natural approach is to consider the polytope $\mP$ as convex hull of its vertices, i.e., we map a point $x\in\mP$ to a point $\lambda_x\in\mS_{N}$, where $\mV = \{v_1,v_2,...\}$ denotes the set of vertices of $\mP$ and $N=|\mV|$. Given a linear objective $c\in\reals^n$, consider its extension to $\mathbb{R}^N$ given by the vector $c_{ext}\in\reals^{N}$ such that $c_{ext}(i) = v_i \cdot c$, for all $i\in[N]$. Now we can see that
\begin{eqnarray}\label{eq:linear_equiv}
\min_{y\in\mP}y\cdot c \equiv \min_{\lambda\in\mS_{N}}\lambda \cdot c_{ext} .
\end{eqnarray}

Thus, following our approach for the probabilistic simplex, it is tempting to consider as the output of a LLOO for $\mP$, the point $p = \sum_{i=1}^N\lambda^*_iv_i$, where $\lambda^*$ is an optimal solution to the following optimization problem:

\begin{eqnarray}\label{eq:lloo_poly}
&&\min_{\lambda\in\mS_{N}}\lambda \cdot c_{ext}  \nonumber \\
\textrm{s.t. } &&\Vert{\lambda-\lambda_x}\Vert_1 \leq d ,
\end{eqnarray}
where $\lambda_x\in\mS_{N}$ is a mapping of the LLOO input point - $x$ to $\mS_{N}$ and $d$ is a positive scalar. Note that since $\lambda^*\in\mS_N$, the solution $p$ is always a feasible point of the polytope $\mP$.

The main question is whether we can find a value of $d$ such that a solution to Problem \eqref{eq:lloo_poly} indeed corresponds to the output of a LLOO for $\mP$ with a reasonable parameter $\rho$, as in the case of the simplex.

Our implementation of a LLOO for an arbitrary polytope $\mP$ based on solving Problem \eqref{eq:lloo_poly} and outputting the corresponding point in $\mP$ is given below (Algorithm \ref{alg:llo}). The algorithm is a clear extension of Algorithm \ref{alg:llosimplex} for the simplex, and basically moves mass from vertices in the support of the input point $x$ (that is, vertices with non-zero weight in the convex decomposition of $x$) which have large (signed) product with the linear objective $c$, to a single vertex (possibly not in the support of the input point $x$) which minimizes the dot product with $c$. The latter is just the result of calling the linear optimization oracle of the polytope with respect to the linear objective $c$.
 
Note that the algorithm assumes that the input point $x$ is given in the form of a convex combination of vertices of the polytope. Later on we show that maintaining such a decomposition of the input point $x$ is straightforward and efficient when the LLOO is used with any of the optimization algorithms considered in this work. Note also that in the algorithm we implicitly fix the value $d$ in Problem \eqref{eq:lloo_poly} to $d= 2\frac{\sqrt{n}\psi}{\xi}r$ (recall that $\psi,\xi$ are geometric quantities of the polytope at hand, defined formally in Section \ref{sec:prelim}), which is justified by our analysis. 

It is important to note that, as in the case of Algorithm \ref{alg:llosimplex} for the simplex, the running time of Algorithm \ref{alg:llo} does not explicitly depends on the number of vertices $N$, but only on the number of non-zeros in the vector $\lambda$ (the mapping of the input point $x$ to $\mS_N$), the natural dimension of $\mP$ - $n$ and the time to complete a single call to the linear optimization oracle of the polytope - $\oraclep(\cdot)$. In particular, observe that in the computations in lines 3-10 of the algorithm, one needs to consider only the vertices $v_i$ for which $\lambda_i > 0$.

\begin{algorithm}
\caption{Local Linear Optimization Oracle for Polytope $\mP$}
\label{alg:llo}
\begin{algorithmic}[1]
\STATE Input: point $x\in\mP$ such that $x = \sum_{i=1}^{N}\lambda_iv_i$, $\lambda\in{\mS_{N}}$, radius $r > 0$, linear objective $c\in\mathbb{R}^n$
\STATE $\Delta \gets  \min\lbrace{\frac{\sqrt{n}\psi}{\xi}r, 1}\rbrace$
\STATE $\forall{j\in[N]}$: $\ell_i \gets v_i \cdot c$
\STATE Let $i_1,...i_{N}$ be a permutation over $[N]$ such that $\ell_{i_1} \geq \ell_{i_2} \geq ... $
\STATE Let $k$ be the smallest integer such that $\sum_{i=1}^j\lambda_i \geq \Delta$
\STATE $p_- \gets \vec{0}$
\FOR{$j=1...k-1$}
\STATE $p_- \gets p_- + \lambda_{i_j}v_{i_j}$
\ENDFOR
\STATE $p_- \gets p_- + \left({\Delta - \sum_{j=1}^{k-1}\lambda_{i_j}}\right)v_{i_k}$
\STATE $v^* \gets \oraclep(c)$
\STATE $p_+ \gets \Delta\cdot v^*$
\RETURN $p \gets x + p_+ - p_-$
\end{algorithmic}
\end{algorithm}

We turn to prove that there is indeed a choice for the parameter $d$ in Problem \eqref{eq:lloo_poly} (the one used to set $\Delta$ in Algorithm \ref{alg:llo}) such that Algorithm \ref{alg:llo} is indeed a LLOO for $\mP$. Towards this end, the main step is to show that there exists a constant $c(\mP)$, such that given a query point $x\in\mP$ in the form $x=\sum_{i=1}^{N}\lambda_x(i)v_i$ where $\lambda_x\in\mS_{N}$, and a point $y\in\mP$, there exists a mapping of $y$ to $\mS_N$, i.e., a point $\lambda_y\in\mS_N$ satisfying $y = \sum_{i=1}^{N}\lambda_y(i)v_i$, such that 
\begin{eqnarray}\label{eq5}
\Vert{\lambda_x-\lambda_y}\Vert_1 \leq c(\mP)\Vert{x-y}\Vert . 
\end{eqnarray}
This fact is a consequence of Lemmas \ref{lem:llo_dist_bound1}, \ref{lem:llo_dist_bound2}. Lemma \ref{lem:llo_dist_bound1} considers a certain way to map a point $y\in\mP$ to $\lambda_y\in\mS_N$ which has useful properties. Lemma \ref{lem:llo_dist_bound2} then builds on these properties to give a consequence in the spirit of Eq. \eqref{eq5} by considering the projection of the vector $(x-y)$ onto a certain set of constraints defining the polytope $\mP$.

\begin{lemma}\label{lem:llo_dist_bound1}
Let $x\in\mP$ and $\lambda\in\mS_{N}$ such that $x = \sum_{i=1}^N\lambda_iv_i$, and let $y\in\mP$. Write $y = \sum_{i=1}^N(\lambda_i-\Delta_i)v_i + (\sum_{i=1}^N\Delta_i)z$ for values $\Delta_i \in [0,\lambda_i] \, \forall{i\in[N]}$ and $z\in\mP$, such that the sum $\Delta = \sum_{i=1}^N\Delta_i$ is minimized. Then, for all $i\in[N]$ for which $\Delta_i > 0$, there exists an index $j_i\in[m]$ such that $A_2(j_i)\cdot v_i < b_2(j_i)$ and $A_2(j_i)\cdot z = b_2(j_i)$.
\end{lemma}

\begin{proof}
By way of contradiction, suppose the lemma is false and let $i'\in[N]$ such that $\Delta_{i'} > 0$ and $\forall{}j\in[m]$ it holds that if $A_2(j)\cdot v_{i'} < b_2(j)$ then $A_2(j)\cdot z < b_2(j)$.  Fixing some $j\in[m]$ we consider two cases. If $A_2(j)\cdot v_{i'} = b_2(j)$ then we have that
\begin{eqnarray}\label{eq3}
\forall\gamma\geq 0: \qquad A_2(j)\cdot(z-\gamma{}v_{i'}) \leq b_2(j) - \gamma{}b_2(j) = (1-\gamma)b_2(j)  .
\end{eqnarray}

On the other hand, if $A_2(j)\cdot v_{i'} < b_2(j)$, then by the assumption we have that $A_2(j)\cdot z <b_2(j)$.  
Denote
\begin{eqnarray*}
\delta_j := b_2(j) - A_2(j)\cdot v_{i'} , \quad \epsilon_j := b_2(j) - A_2(j)\cdot z,
\end{eqnarray*}
and note that $\delta_j >0$ and $\epsilon_j > 0$.

Now it holds that 
\begin{eqnarray}\label{eq4}
\forall\gamma\in[0,\frac{\epsilon_j}{\delta_j}]:  \qquad A_2(j)\cdot(z-\gamma{}v_{i'}) &=& b_2(j) - \epsilon_j -\gamma(b_2(j)-\delta_j) \nonumber \\
&=& (1-\gamma)b_2(j) - (\epsilon_j - \gamma\delta_j) \nonumber \\
&\leq & (1-\gamma)b_2(j) .
\end{eqnarray}
Let $\tilde{\gamma} = \min\{\frac{\epsilon_j}{\delta_j} \, | \, j\in[m], \, A_2(j)\cdot v_{i'} < b_2(j)\}$ (note that by definition $\tilde{\gamma} > 0$, since it is the minimum over a set of strictly positive scalars). 

Combining Eq. \eqref{eq3}, \eqref{eq4} for all $j\in[m]$, we have that
\begin{eqnarray*}
\forall \gamma\in[0,\min\{\tilde{\gamma},1\}]: \qquad A_2(z-\gamma{}v_{i'}) \leq (1-\gamma)b_2 .
\end{eqnarray*}

Since $v_{i'},z$ are both feasible, it also holds that $A_1(z-\gamma{}v_{i'}) = (1-\gamma)b_1$ and thus we arrive at the conclusion that
\begin{eqnarray}
\forall \gamma\in[0,\min\{\tilde{\gamma},1\}]: \qquad z-\gamma{}v_{i'}\in(1-\gamma)\mP .
\end{eqnarray}

Thus in particular, by choosing $\gamma\in(0, \min\{\tilde{\gamma}, 1\}]\cap(0,\frac{\Delta_{i'}}{\Delta}]$ (recall that $\Delta_{i'}>0$ and $\tilde{\gamma} > 0$), we have that there exists $w\in\mP$ such that $z =(1-\gamma{})w + \gamma{}v_{i'}$, and
\begin{eqnarray*}
y&=&\sum_{i=1}^N(\lambda_i-\Delta_i)v_i + \Delta{}z \\
&=& \sum_{i=1}^N(\lambda_i-\Delta_i)v_i + \Delta((1-\gamma{})w + \gamma{}v_{i'}) \\
&=& \left({\sum_{i=1, i\neq i'}^N(\lambda_i-\Delta_i)v_i}\right) + (\lambda_{i'} - (\Delta_{i'} - \gamma\Delta) - \gamma\Delta)v_{i'} + \Delta(1-\gamma{})w + \gamma\Delta{}v_{i'} \\
&=& \left({\sum_{i=1, i\neq i'}^N(\lambda_i-\Delta_i)v_i }\right)+ (\lambda_{i'} - (\Delta_{i'} - \gamma\Delta)v_{i'} + \Delta(1-\gamma{})w .
\end{eqnarray*}

Thus, by defining $\forall{i\in[N]}, i\neq{}i'$: $\tilde{\Delta}_i = \Delta_i$ and $\tilde{\Delta}_{i'} = \Delta_{i'} - \gamma\Delta$, we have that $y=\sum_{i=1}^N(\lambda_i-\tilde{\Delta}_i)v_i + (\sum_{i=1}^N\tilde{\Delta}_i)w$ with $\sum_{i=1}^N\tilde{\Delta}_i < \sum_{i=1}^N\Delta_i$, which contradicts the minimality of $\sum_{i=1}^N\Delta_i$.
\end{proof}

In Lemma \ref{lem:llo_dist_bound2} we are going to examine the projection of a vector $(x-y)$ onto a set of constraints of $\mP$ satisfied by a certain feasible point $z\in\mP$. However, we would like that this set will not be too large. The following simple lemma shows that it suffices to consider a basis for the set of constraints satisfied by $z$.

\begin{lemma}\label{lemma:eq_basis}
Let $z\in\mP$ and denote $C(z) = \lbrace{i \in [m] \, | \, A_2(i)\cdot z = b_2(i)}\rbrace$ and let $C_0(z)\subseteq{}C(z)$ be such that the set $\lbrace{A_2(i)}\rbrace_{i\in{}C_0(z)}$ is a basis for the set $\lbrace{A_2(i)}\rbrace_{i\in{}C(z)}$. Then given a point $y\in\mP$, if there exists $i\in{}C(z)$ such that $A_2(i)\cdot y < b_2(i)$, then there exists $i_0\in{}C_0(z)$ such that $A_2(i_0)\cdot y < b_2(i_0)$.
\end{lemma}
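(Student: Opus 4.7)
The plan is to argue by contrapositive. Suppose, for every $i_0 \in C_0(z)$, we have $A_2(i_0)y = b_2(i_0)$; I will show that then $A_2(i)y = b_2(i)$ for every $i \in C(z)$, contradicting the hypothesis. The reason the contrapositive takes this clean form is that $y\in\mP$ forces $A_2(i_0)y \leq b_2(i_0)$, so for a given $i_0\in C_0(z)$ either the constraint is strictly slack or it is tight. Hence negating ``$\exists i_0\in C_0(z):A_2(i_0)y<b_2(i_0)$'' is exactly ``$\forall i_0\in C_0(z):A_2(i_0)y=b_2(i_0)$.''

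The key step exploits that $\{A_2(i_0)\}_{i_0\in C_0(z)}$ is a basis for $\{A_2(i)\}_{i\in C(z)}$. Fix any $i\in C(z)$ and write
\begin{equation*}
A_2(i) \;=\; \sum_{i_0\in C_0(z)}\alpha_{i,i_0}\,A_2(i_0)
\end{equation*}
for suitable coefficients $\alpha_{i,i_0}\in\mathbb{R}$. Evaluating this identity at $z$ and using that every constraint indexed by $C(z)\supseteq C_0(z)$ is tight at $z$ gives
\begin{equation*}
b_2(i) \;=\; A_2(i)z \;=\; \sum_{i_0\in C_0(z)}\alpha_{i,i_0}\,A_2(i_0)z \;=\; \sum_{i_0\in C_0(z)}\alpha_{i,i_0}\,b_2(i_0).
\end{equation*}
Evaluating the same identity at $y$ and using the contrapositive assumption yields
\begin{equation*}
A_2(i)y \;=\; \sum_{i_0\in C_0(z)}\alpha_{i,i_0}\,A_2(i_0)y \;=\; \sum_{i_0\in C_0(z)}\alpha_{i,i_0}\,b_2(i_0) \;=\; b_2(i).
\end{equation*}
Since $i\in C(z)$ was arbitrary, this shows $A_2(i)y=b_2(i)$ for all $i\in C(z)$, which contradicts the assumed existence of an $i\in C(z)$ with $A_2(i)y<b_2(i)$.

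There is essentially no real obstacle here; the only subtlety is the observation above that feasibility of $y$ upgrades ``not strictly less'' to ``equal'' on each row in $C_0(z)$, which is what lets the linear-combination argument close. Note that the argument does not use the equality constraints $A_1x=b_1$ at all, and it makes no use of $y$ being a vertex — it works for any $y\in\mP$.
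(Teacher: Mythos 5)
Your proof is correct and follows essentially the same route as the paper's: express $A_2(i)$ in the basis $\{A_2(i_0)\}_{i_0\in C_0(z)}$, evaluate at $z$ and $y$, and conclude the two agree on row $i$. The paper phrases it as a direct contradiction (the augmented system $A_2(j)x=b_2(j)$, $j\in C_0(z)\cup\{i\}$, would be inconsistent if $y$ tight on $C_0(z)$ could be slack on $i$, yet $z$ solves it); your contrapositive writeup is the same argument, stated a bit more transparently.
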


\begin{proof}
Fix $z\in\mP$ and let $C(z), C_0(z)$ be as in the lemma. Assume by way of contradiction that there exists $y\in\mP$ and $i\in{}C(z)$ such that $A_2(i)\cdot y < b_2(i)$ and for any $j\in{}C_0(z)$ it holds that $A_2(j)\cdot y = b_2(j)$. Since $A_2(i)$ is a linear combination of vectors from $\lbrace{A_2(j)}\rbrace_{j\in{}C_0(z)}$, there exists scalars $\{\alpha_j\}_{j\in{}C_0(z)}$, not all zeros, such that $A_2(i) = \sum_{j\in C_0(z)}\alpha_jA_2(j)$. From our assumption on $y$ it follows that
\begin{eqnarray*}
b_2(i) > A_2(i)\cdot y = \sum_{j\in C_0(z)}\alpha_jA_2(j)\cdot y = \sum_{j\in C_0(z)}\alpha_jb_2(j) .
\end{eqnarray*}
However, since for all $j\in C(z)$ it holds that $A_2(j)\cdot z = b_2(j)$, we have that
\begin{eqnarray*}
b_2(i) = A_2(i)\cdot z = \sum_{j\in C_0(z)}\alpha_jA_2(j)\cdot z = \sum_{j\in C_0(z)}\alpha_jb_2(j) .
\end{eqnarray*}

Thus we arrive at a contradiction and the lemma follows.
\end{proof}

\begin{lemma}\label{lem:llo_dist_bound2}
Let $x\in\mP$ and $\lambda\in\mS_N$ such that $x=\sum_{i=1}^N\lambda_ix_i$, and let $y\in\mP$. Write $y=\sum_{i=1}^N(\lambda_i-\Delta_i)v_i + (\sum_{i=1}^N\Delta_i)z$, where $\forall i\in[N]: \Delta_i\in[0,\lambda_i]$ and $z\in\mP$, such that the sum $\sum_{i=1}^N\Delta_i$ is minimized (as in Lemma \ref{lem:llo_dist_bound1}). Then it holds that
\begin{eqnarray*}
\sum_{i=1}^N\Delta_i \leq \frac{\sqrt{n}\psi}{\xi}\Vert{x-y}\Vert .
\end{eqnarray*}
As a consequence, $y$ could be mapped to a point $\lambda_y\in\mS_N$ such that 
\begin{eqnarray*}
\Vert{\lambda-\lambda_y}\Vert_1 \leq 2\sum_{i=1}^N\Delta_i \leq 2\frac{\sqrt{n}\psi}{\xi}\Vert{x-y}\Vert .
\end{eqnarray*}
\end{lemma}
\begin{proof}
Denote $C(z) = \lbrace{j\in[m] \, | \, A_2(j)z = b_2(j)}\rbrace$ and note that according to Lemma \ref{lem:llo_dist_bound1} it holds that $C(z) \neq \emptyset$. Let $C_0(z) \subseteq C(z)$ such that the set of vectors $\lbrace{A_2(i)}\rbrace_{i\in{}C_0(z)}$ is a basis for the set $\lbrace{A_2(i)}\rbrace_{i\in{}C(z)}$. Denote by $A_{2,z} \in \mathbb{R}^{\vert{C_0(z)}\vert\times{}n}$ the matrix $A_2$ after deleting every row $i\notin C_0(z)$ and recall that by definition $\Vert{A_{2,z}}\Vert \leq \psi$. Then it holds that
\begin{eqnarray*}
\Vert{x-y}\Vert^2 &=& \Vert{\sum_{i\in[N]: \Delta_i > 0}\Delta_i(v_i - z)}\Vert^2 
\geq \frac{1}{\Vert{A_{2,z}}\Vert^2}\Vert{A_{2,z}\sum_{i\in[N]: \Delta_i > 0}\Delta_i(v_i - z)}\Vert^2 \\
& \geq & \frac{1}{\psi^2}\Vert{\sum_{i\in[N]: \Delta_i > 0}\Delta_iA_{2,z}(v_i - z)}\Vert^2
\\
&=& \frac{1}{\psi^2}\sum_{j\in{}C_0(z)}\left({\sum_{i\in[N]: \Delta_i > 0}\Delta_i{}(A_2(j)\cdot v_i - b_2(j))}\right)^2 .
\end{eqnarray*}
Note that $\vert{C_0(z)}\vert \leq n$ and that for any vector $x\in\mathbb{R}^{\vert{C_0(z)}\vert}$ it holds that $\Vert{x}\Vert \geq \frac{1}{\sqrt{\vert{C_0(z)}\vert}}\Vert{x}\Vert_1$. Thus we have that
\begin{eqnarray*}
\Vert{x-y}\Vert^2 
&\geq & \frac{1}{n\psi^2}\left({\sum_{j\in{}C_0(z)}\left|{\sum_{i\in[N]: \Delta_i > 0}\Delta_i{}(A_2(j)\cdot v_i - b_2(j))}\right|}\right)^2 \\
&=& \frac{1}{n\psi^2}\left({\sum_{j\in{}C_0(z)}\sum_{i\in[N]: \Delta_i > 0}\Delta_i{}(b_2(j) - A_2(j)\cdot v_i)}\right)^2 .
\end{eqnarray*}

Combining Lemma \ref{lem:llo_dist_bound1} and Lemma \ref{lemma:eq_basis}, we have that for all $i\in[N]$ such that $\Delta_i > 0$ there exists $j\in{}C_0(z)$ such that $A_2(j)\cdot v_i \leq b_2(j)-\xi$. Hence,
\begin{eqnarray*}
\Vert{x-y}\Vert^2 &\geq & \frac{1}{n\psi^2}\left({\sum_{i\in[N]: \Delta_i > 0}\Delta_i{}\xi}\right)^2 
= \frac{\xi^2}{n\psi^2}\left({\sum_{i\in[N]: \Delta_i > 0}\Delta_i{}}\right)^2 .
\end{eqnarray*}

Thus we conclude that $\sum_{i=1}^N\Delta_i \leq \frac{\sqrt{n}\psi}{\xi}\Vert{x-y}\Vert$.
\end{proof}

The following lemma establishes that Algorithm \ref{alg:llo} is indeed a local linear optimization oracle for $\mP$ with parameter $\rho = \sqrt{n}\mu$ ($\mu$ is a geometric parameter of $\mP$ that was formally defined in Section \ref{sec:prelim}).

\begin{lemma}\label{lem:oracle}
Let $p$ be the point returned by algorithm \ref{alg:llo} when called with the input $x = \sum_{i=1}^N\lambda_iv_i$, $r$, $c$.
Then the following conditions hold:
\begin{enumerate}
\item $p\in\mP$.
\item $\Vert{x - p}\Vert \leq \sqrt{n}\mu{}r$.
\item $\forall{}y\in{}\ball_r(x)\cap\mP$ it holds that $c\cdot y \geq c \cdot p$.
\end{enumerate}
\end{lemma}
\begin{proof}
Condition 1. holds since $p$ is clearly given as a convex combination of points in $\mathcal{V}$. 
For conditions 2,3, note that we can write the returned point $p$ as $p=\sum_{i=1}^N(\lambda_i-\Delta_i)v_i + \Delta{}v^*$, where $\Delta$ is as in Algorithm \ref{alg:llo},  for all $i\in[N]: \Delta_i\in[0,\lambda_i]$, $\sum_{i=1}^N\Delta_i = \Delta$ and $v^*\in\mV$.
Thus we have that
\begin{eqnarray*}
\Vert{x-p}\Vert &=& \Vert{\sum_{i=1}^N\Delta_iv_i - \Delta{}v^*}\Vert
= \Vert{\sum_{i=1}^N\Delta_i(v_i -v^*)}\Vert \\
& \leq & \sum_{i=1}^N\Delta_i\Vert{v_i-v^*}\Vert \leq \Delta\mD \leq \frac{\sqrt{n}\psi\mD}{\xi} = \sqrt{n}\mu ,
\end{eqnarray*}
which gives condition 2.

Finally, for condition 3, note that from Algorithm \ref{alg:llo} and Lemma \ref{lem:simplex_opt} it follows that the returned point $p$ could be written as $p = \sum_{i=1}^N\lambda^*(i)v_i$ such that $\lambda^* $ is an optimal solution to Problem \eqref{eq:lloo_poly} with parameter $d = 2\frac{\sqrt{n}\psi}{\xi}r$. From Lemma \ref{lem:llo_dist_bound2} we have that for any point $y\in\ball_r(x)\cap\mP$, $y$ could be mapped to a point $\lambda_y\in\mS_N$ such that $\Vert{\lambda-\lambda_y}\Vert_1 \leq 2\frac{\sqrt{n}\psi}{\xi}r$. Thus we have that
\begin{eqnarray*}
p\cdot c &=& \sum_{i=1}^N\lambda^*(i)v_i\cdot c = \lambda^* \cdot c_{ext} = \min_{\lambda_z\in\mS_N: \, \Vert{\lambda - \lambda_z}\Vert_1 \leq 2\frac{\sqrt{n}\psi}{\xi}r}\lambda_z \cdot c_{ext} \leq \lambda_y \cdot c_{ext} \\
&=& \sum_{i=1}^N\lambda_y(i)v_i\cdot c = y \cdot c .
\end{eqnarray*}
\end{proof}

\subsubsection{Maintaining a small decomposition of the input point $x$ and efficient implementation of Algorithm \ref{alg:llo}}

Algorithm \ref{alg:llo} assumes that the input point $x$ is given by its convex decomposition into vertices. All
optimization algorithms in this work use Algorithm \ref{alg:llo} in the following way: they give as input to Algrotihm \ref{alg:llo} the current feasible iterate $x_t\in\mP$, and then given the output of Algorithm \ref{alg:llo}, denoted in all algorithms by $p_t$, they produce the next iterate $x_{t+1}$ by taking a convex combination $x_{t+1} \gets  (1-\alpha)x_t + \alpha{}p_t$, for some parameter $\alpha\in[0,1]$. Note that Algorithm \ref{alg:llo} implicitly produces the convex decomposition of the returned point $p_t$ and thus, given the convex decomposition of $x_t$, updating it to the convex decomposition of $x_{t+1}$ is straightforward. 

Moreover, denoting $\mathcal{V}_t \subseteq \mathcal{V}$ the set of vertices that forms the convex decomposition of $x_t$ (i.e. the vertices with non-zero weight in the decomposition), it is clear from Algorithm \ref{alg:llo} and the discussion above that $\vert{\mathcal{V}_{t+1} \setminus \mathcal{V}_t}\vert \leq 1$, since at most a single vertex ($v^*$ in Algorithm \ref{alg:llo}) is added to the decomposition. 

This brings us to the following lemma.
\begin{lemma}\label{lem:llo_complex}
Algorithm \ref{alg:llo} admits an implementation such that each invocation of the algorithm requires a single call to the oracle $\oraclep$ and additional $O(T(n+\log{T}))$ time, where $T$ is the overall number of calls to the algorithm.
\end{lemma}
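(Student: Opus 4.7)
The plan is to analyse each step of Algorithm \ref{alg:llo} under the assumption that, between successive invocations, the calling routine maintains the convex decomposition $x = \sum_i \lambda_i v_i$ explicitly (as a list of (vertex, weight) pairs), and updates it to the decomposition of $x_{t+1} = (1-\alpha)x_t + \alpha p_t$ when a new iterate is produced. Since $p_t$ is itself returned as a convex combination of the $v_i$'s together with the newly produced vertex $v = \oraclep(c)$, the update of the decomposition from step $t$ to step $t+1$ is immediate from the weights $\lambda_i'$ computed inside Algorithm \ref{alg:llo} and the coefficient $1-\sum_i \lambda_i'$ on $v$.

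The key structural observation, already recorded in the paragraph preceding the lemma, is that $|\mathcal{V}_{t+1}\setminus \mathcal{V}_t| \leq 1$: Algorithm \ref{alg:llo} introduces only the single new vertex $v$, while possibly dropping several existing vertices whose weights drop to $0$. Consequently, an easy induction on $t$ shows that the size $k_t$ of the decomposition fed into the $t$-th call satisfies $k_t \leq t$, hence $k_t \leq T$ throughout.

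Given this bound, I would account for each line of the algorithm separately. The single call to $\oraclep(c)$ in line~9 accounts for the one oracle call claimed. Computing all inner products $l_i = c^\top v_i$ in line~3 costs $O(k_t \cdot n) = O(Tn)$ time; sorting the $l_i$'s in line~4 with any comparison sort costs $O(k_t \log k_t) = O(T\log T)$; the loop in lines~5--8 performs $O(k_t) = O(T)$ elementary arithmetic operations; and finally updating the stored decomposition by appending $v$ with coefficient $1-\sum_i \lambda_i'$ and discarding indices $i$ with $\lambda_i' = 0$ takes another $O(T)$ time (one may deduplicate against existing vertices by a single linear scan, at cost $O(T)$ per call, which does not affect the bound; alternatively, one may simply allow $v$ to be stored as a new entry and rely on the $k_t \leq t$ bound). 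Summing these gives the claimed $O(T(n+\log T))$ bound per invocation, on top of the one oracle call.

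There is no real obstacle here; the only thing to be a little careful about is verifying the invariant $k_t \leq t$ (which follows from the paragraph before the lemma and a one-line induction) and noticing that, because $c$ changes between calls, the sort in line~4 cannot be amortised across calls and must genuinely be redone each time, which is precisely why the stated per-call bound scales with $T\log T$ rather than with $\log T$ alone.
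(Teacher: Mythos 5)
Your proof is correct and follows essentially the same route as the paper's: establish the invariant $k_t\leq T$ via the observation that each call adds at most one new vertex, then bound the per-call work by $O(k_t n)$ for the inner products and $O(k_t\log k_t)$ for the sort. The paper's own argument is terser but identical in substance; your extra remarks on deduplication and the non-amortisability of the sort are reasonable but not needed for the stated bound.
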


\begin{proof}
Clearly Algorithm \ref{alg:llo} calls the oracle $\oraclep$ only once per invocation. The complexity of all other operations depends on the number of non-zeros in the vector $\lambda$, i.e., the number of vertices in the convex decomposition of the input point $x$. As discussed above, if we denote by $x_t, x_{t+1}$ the inputs to the $t$ and $t+1$ times Algorithm \ref{alg:llo} was invoked respectively, and by $N_t, N_{t+1}$ the number of vertices in the convex decomposition of $x_t, x_{t+1}$ respectively, then $N_{t+1} \leq N_t+1$. Thus, if the algorithm is invoked for a total number of $T$ times and the initial point - $x_1$ is a vertex of $\mP$, then at any time $t\in[T]$ it holds that  $N_t \leq T$. Since all other operations except for calling $\oraclep$ consist of computing $N_t$ inner products between vectors in $\mathbb{R}^n$ and sorting $N_t$ scalars, the lemma follows.
\end{proof}

Note that we can get rid of the linear dependence on $T$ in the bound in lemma \ref{lem:llo_complex} by decomposing the iterate $x_t$ into a convex sum of fewer vertices in case the number of vertices in the current decomposition - $N_t$ becomes too large. From Carath\'eodory's theorem we know that we can find such a decomposition with at most $n+1$ vertices.
Moreover, for many polytopes of interest (such as the flow polytope), there is an even more efficient algorithm for computing such a decomposition (however these are beyond the scope of this paper). It follows from previous discussions that we will need to invoke such a decomposition procedure only every $O(n)$ iterations which will keep the amortized iteration complexity low.

Another generic approach to the above problem that relies only on the use of the linear optimization oracle - $\oraclep$ (and which might also be more efficient), is to ``bootstrap" Algorithm \ref{alg:offline} to compute a more compact decomposition of $x_t$.
If $N_t$ is too large we can compute a new decomposition of the input point $x_t$ by solving the optimization problem $\min_{y\in\mP}\Vert{x_t-y}\Vert^2$ up to some precision $r_t^2$, where $r_t$ is the current radius parameter of the LLOO. Using Theorem \ref{thm:offthm}, the result will be a point $\tilde{x}_t$ given by a decomposition into $O(n\mu^2\log(1/r_t))$ vertices of $\mP$ such that $\Vert{\tilde{x}_t-x_t}\Vert \leq r_t$. Now, by using the point $\tilde{x}_t$ to maintain the input to the LLOO and replacing the input $r_t$ to LLOO with $\tilde{r}_t = 2r_t$ we get (via the triangle inequality) a modified LLOO with parameter $\tilde{\rho} = 2\rho+1=2\sqrt{n}\mu+1$. As discussed above, we will need to invoke this decomposition procedure only every $O(n\mu^2\log(1/r_t))$ iterations which leads to the following lemma.

\begin{lemma}\label{lem:llo_complex2}
Assume that on every invocation of the LLOO algorithm, the input $r$ to the LLOO is lower-bounded by some $r_0>0$. Then there exists an implementation for a LLOO with parameter $\rho=2\sqrt{n}\mu+1$, such that the amortized linear optimization oracle complexity per iteration is 2, and the additional amortized complexity per iteration is $O\left({n\mu^2\log(1/r_0)\left({n+\log(n\mu^2\log(1/r_0)}\right)}\right)$.
\end{lemma}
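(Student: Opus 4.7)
The plan is to follow the re-decomposition scheme sketched in the paragraph preceding the lemma and then amortize its cost. For an iterate $x\in\mP$ whose convex decomposition has grown too large, I would build a fresh small decomposition by approximately minimizing $f(y)=\|x-y\|^2$ over $\mP$ using Algorithm~\ref{alg:offline}, instantiated with the \emph{original} LLO of Lemma~\ref{lem:oracle} (parameter $\sqrt{n}\mu$) and started from an arbitrary vertex $v_0\in\mathcal{V}$. Since $f$ is $1$-smooth and $1$-strongly convex with minimum $0$ attained at $y=x$, and the initial error is at most $D^2$, Theorem~\ref{thm:offthm} yields after $T_0=O(n\mu^2\log(1/r_1))$ inner iterations a point $x'$ with $\|x'-x\|\le r_1$. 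By the remark following Lemma~\ref{lem:oracle}, each inner iteration adds at most one vertex to the decomposition, so the decomposition of $x'$ has at most $T_0+1$ vertices.

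The modified LLO then works as follows on input $(x,r,c)$ with $r\ge r_1$: if the current decomposition of $x$ has more than $2T_0$ vertices, replace $x$ by $x'$ as above; otherwise leave $x$ untouched. Then call Algorithm~\ref{alg:llo} on input $(x',2r,c)$ and return its output $p$. For the correctness of this as an LLO with parameter $3\sqrt{n}\mu$, note that $\|x-x'\|\le r_1\le r$, and by Lemma~\ref{lem:oracle} applied at center $x'$ with radius $2r$ we have $p\in\mP$ and $\|x'-p\|\le 2\sqrt{n}\mu\, r$, so $\|x-p\|\le r+2\sqrt{n}\mu\, r\le 3\sqrt{n}\mu\, r$ (using $\sqrt{n}\mu\ge 1$). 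Moreover, for any $y\in\ball_r(x)\cap\mP$ the triangle inequality gives $\|y-x'\|\le 2r$, so $y\in\ball_{2r}(x')\cap\mP$ and Lemma~\ref{lem:oracle} gives $c^{\top}y\ge c^{\top}p$.

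For the amortized complexity, one re-decomposition costs $T_0$ calls to $\oraclep$ and, by Lemma~\ref{lem:llo_complex} with running decomposition size $k\le T_0$, additional $O(T_0^2(n+\log T_0))$ time. Because each outer LLO invocation adds at most one vertex to the decomposition of the current iterate, a re-decomposition that drops the size from more than $2T_0$ to at most $T_0+1$ is followed by at least $T_0-1$ outer iterations before the next trigger; hence the amortized per-iteration cost of re-decomposition is $O(1)$ oracle calls and $O(T_0(n+\log T_0))$ extra time. Adding the $1$ oracle call and $O(T_0(n+\log T_0))$ time of the outer Algorithm~\ref{alg:llo} call (again via Lemma~\ref{lem:llo_complex} with $k\le 2T_0+1$) gives the claimed $2$ oracle calls and $O(n\mu^2\log(1/r_1)(n+\log(n\mu^2\log(1/r_1))))$ extra time per iteration.

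The main point to watch is avoiding circularity: although the re-decomposition itself invokes Algorithm~\ref{alg:llo}, it does so with the \emph{unmodified} Lemma~\ref{lem:oracle} oracle, not with the new outer oracle we are constructing. This is consistent because each inner execution of Algorithm~\ref{alg:offline} starts from a vertex and runs for only $T_0$ steps, so the decomposition size of every intermediate inner iterate remains at most $T_0$ and Lemma~\ref{lem:llo_complex} applies with the advertised $k\le T_0$ bound.
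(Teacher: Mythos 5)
Your proposal is correct and follows exactly the approach the paper only sketches in the paragraph preceding the lemma (re-decompose by approximately minimizing $\|x-y\|^2$ via Theorem~\ref{thm:offthm}, absorb the $r_1$ approximation error into the enlarged parameter $3\sqrt{n}\mu$, and amortize the $T_0=O(n\mu^2\log(1/r_1))$ inner calls over the $\Theta(T_0)$ outer iterations between triggers). The paper gives no formal proof of this lemma, and your write-up supplies the missing details cleanly, including the triangle-inequality check of the two LLO conditions at radius $2r$ and the non-circularity of using the base Lemma~\ref{lem:oracle} oracle inside the re-decomposition.
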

The proof follows the same lines as that of Lemma \ref{lem:llo_complex}.

We note that in our online algorithms the lower bound $r_0$ in Lemma \ref{lem:llo_complex2} will always satisfy: $\log(1/r_0) = O(\log(T))$, where $T$ is the overall length of the game, and thus the running time per iteration will depend only logarithmically on $T$. 

It is also worth mentioning that we can significantly accelerate Algorithm \ref{alg:llo} by using parallel computations. Note that all dot product computations in line 3 of the algorithm (recall again that in practice we need to carry out these computations only for vertices $v_i$ for which $\lambda_i > 0$) are independent of each other and could be computed in parallel.

\section{Online and Stochastic Convex Optimization}\label{sec:online}

In this section we present algorithms for the general setting of online convex optimization that are suitable when the decision set is a polytope. We present regret bounds for both general convex losses and for strongly convex losses. These regret bounds imply convergence rates for stochastic convex optimization and non-smooth convex optimization over polyhedral sets as described in subsections \ref{sec:stoc_opt}, \ref{sec:nonsmooth_opt}. In the sequel we also present an algorithm for the bandit setting.

Our algorithm for online convex optimization in the full information setting is given below (Algorithm \ref{alg:rftlfw}) . The algorithm is based on the ideas presented in Subsection \ref{sec:RFTL}, i.e., iteratively approximating the steps of a regret-optimal algorithm known as \textit{Regularize Follow the Leader} using the update step of our Algorithm \ref{alg:offline}, which amounts to a single call a local linear optimization oracle (which in turn, given the construction presented in Section \ref{sec:llo}, amounts to a single call to the linear optimization oracle of the polytope).

For ease of presentation, we use a standard assumption that the algorithm has knowledge on several parameters of the problem including the length of the game - $T$, an upper bound on the magnitude of the gradients of the observed loss functions - $G$, and a lower bound on the strong convexity of the observed functions - $\sigma$ (which may also be zero) \footnote{in case one of these bounds is unknown, one can use standard techniques such as the well known ``doubling trick", which increases the overall regret only by a log factor.} .

\begin{algorithm}
\caption{LLOO-based Online Convex Optimization}
\label{alg:rftlfw}
\begin{algorithmic}[1]
\STATE Input: horizon $T$, upper bound on gradients $G$, strong convexity parameter $\sigma$, $\mathcal{A}(x,r,c)$ - LLOO with parameter $\rho$ for $\mP$
\STATE Set: $\alpha \gets  \left\{ \begin{array}{ll}
         (3\rho^2)^{-1} & \mbox{if $\sigma = 0$}\\
         (5\rho^2)^{-1}& \mbox{if $\sigma > 0$}\end{array} \right.$
\STATE Set: $\eta \gets \frac{D}{18G\rho\sqrt{T}}, \qquad T_0 \gets (25\rho^2)^2$
\STATE Let $x_1$ be an arbitrary vertex in $\mathcal{V}$
\FOR{$t = 1...T$}
\STATE Play $x_t$
\STATE Receive $f_t(x)$
\STATE Define the function:
\begin{eqnarray*}
F_t(x) :=  \left\{ \begin{array}{ll}
         \eta\left({\sum_{\tau=1}^t\nabla{}f_{\tau}(x_{\tau})\cdot x}\right) + \Vert{x-x_1}\Vert^2 & \mbox{if $\sigma = 0$}\\
        \left({\sum_{\tau=1}^t\nabla{}f_{\tau}(x_{\tau})\cdot x + \frac{\sigma}{2}\Vert{x-x_{\tau}}\Vert^2}\right) + T_0\frac{\sigma}{2}\Vert{x-x_1}\Vert^2 & \mbox{if $\sigma > 0$}\end{array} \right.
\end{eqnarray*}
\STATE Set:
\begin{eqnarray*}
r_t \gets  \left\{ \begin{array}{ll}
         \frac{D}{\sqrt{T}}\left({\rho + \frac{1}{18\rho}}\right) & \mbox{if $\sigma = 0$}\\
        \frac{2(G+\sigma{}D)}{\sigma(t+T_0)}(60\rho^2+1) & \mbox{if $\sigma > 0$}\end{array} \right.
\end{eqnarray*}
\STATE $p_t \gets \mathcal{A}(x_t,r_t,\nabla{}F_t(x_t))$
\STATE $x_{t+1} \gets x_t + \alpha(p_t - x_t)$
\ENDFOR
\end{algorithmic}
\end{algorithm}

We prove the following two main theorems.

Denote $G = \sup_{x\in\mP,t\in[T]}\Vert{\nabla{}f_t(x)}\Vert$ and recall that we have a construction for a local linear optimization oracle with parameter $\rho = O(\sqrt{n}\mu)$ for the decision set $\mP$.

\begin{theorem}\label{thr:mainthr}
In case Algorithm \ref{alg:rftlfw} is instanciated with the LLOO described in Section \ref{sec:llo} (Algorithm \ref{alg:llo}), then for arbitrary convex loss fundtions, the regret of the algorithm is $O(GD\mu\sqrt{nT})$.
\end{theorem}

\begin{theorem}\label{thm:sc}
In case Algorithm \ref{alg:rftlfw} is instanciated with the LLOO described in Section \ref{sec:llo} (Algorithm \ref{alg:llo}), then for $\sigma$-strongly convex loss functions, the regret of the algorithm is $O(\sigma{}D^2\rho^4 + (G+\sigma{}D)^2n\mu^2/\sigma)\log{T})$.
\end{theorem}

Applying the above two theorems with the reduction of stochastic optimization to online optimization described in Subsection \ref{sec:stoc_opt}, yields the following two corollaries.

\begin{corollary}
Let $F(x)=\mathbb{E}_{f\sim\mathcal{D}}[f(x)]$, where $\mathcal{D}$ is a distribution over arbitrary convex functions, and assume the availability of an oracle $\mO_\mD$ for sampling functions from the distribution $\mD$. Then running Algorithm \ref{alg:rftlfw}, instanciated with the LLOO described in Section \ref{sec:llo}, with a sequence of $T$ loss functions sampled i.i.d. using $\mO_{\mD}$ and denoting $\bar{x}_T = \frac{1}{T}\sum_{t=1}^Tx_t$ (the average of iterates), we have that $$\E[F(\bar{x}_T)] - \min_{x^* \in \mP} F(x^*) = O\left({\frac{GD\sqrt{n}\mu}{\sqrt{T}}}\right) .$$
\end{corollary}

\begin{corollary}
Let $F(x)=\mathbb{E}_{f\sim\mathcal{D}}[f(x)]$, where $\mathcal{D}$ is a distribution over $\sigma$-strongly convex functions, and assume the availability of an oracle $\mO_\mD$ for sampling functions from the distribution $\mD$. Then running Algorithm \ref{alg:rftlfw}, instanciated with the LLOO described in Section \ref{sec:llo}, with a sequence of $T$ loss functions sampled i.i.d. using $\mO_{\mD}$ and denoting $\bar{x}_T = \frac{1}{T}\sum_{t=1}^Tx_t$ (average of iterates), we have that $$\E[F(\bar{x}_T)] - \min_{x^* \in \mP} F(x^*) = O\left({\frac{\sigma^2D^2\rho^4 + (G+\sigma{}D)^2n\mu^2\log(T)}{\sigma{}T}}\right) .$$
\end{corollary}

In the following two subsections we prove Theorems \ref{thr:mainthr}, \ref{thm:sc}.

\subsection{Analysis for general convex losses}


In this subsection we analyze the regret of Algorithm \ref{alg:rftlfw} in case the observed loss functions are all convex but not necessarily strongly convex, that is, $\sigma = 0$.

Consider the sequence of points $\lbrace{x^*_t}\rbrace_{t=1}^{T+1}$ such that for all $t\in[T+1]$, $x^*_t = \arg\min_{x\in\mP}F_{t-1}(x)$, where for all $t\in[T]$, $F_t(x)$ is as defined in Algorithm \ref{alg:rftlfw} (for $\sigma = 0)$ and for $t=0$ we define $F_0(x):=\Vert{x-x_1}\Vert^2$ . The regret analysis is comprised of two parts. Part 1 shows that on any time $t$, the point $x_t$ played by Algorithm \ref{alg:rftlfw} is close to the corresponding point $x^*_t$. Thus by a Lipschitz argument, the cumulative loss of the sequence $\lbrace{x_t}\rbrace_{t=1}^T$ is close to that of $\lbrace{x^*_t}\rbrace_{t=1}^T$. Part 2 then follows the analysis of an algorithm known as \textit{Regularized Follow the Leader} (see \cite{Hazan09}) to claim that the sequence of points $\lbrace{x^*_t}\rbrace_{t=1}^T$ achieves low regret with respect to the sequence of observed loss functions. 


\begin{lemma}\label{regretlemma}
Fix $\epsilon > 0$. Let
\begin{eqnarray*}
\eta = \frac{\sqrt{\epsilon}}{18G\rho^2}, \qquad \alpha = \frac{1}{3\rho^2}, \qquad r_t = \sqrt{\epsilon}+\eta{}G \quad \forall t\in[T] .
\end{eqnarray*}

Then, the sequence of points $\lbrace{x_t}\rbrace_{t=1}^T$ produced by Algorithm \ref{alg:rftlfw} satisfies
that for all $t\in[T]$, $\Vert{x_t - x_t^*}\Vert \leq \sqrt{\epsilon}$.
\end{lemma}
\begin{proof}
Observe that on any time $t\in\{0,1,...,T\}$ it holds that the function $F_t(x)$ is $2$-strongly convex and $2$-smooth.

We prove by induction that for all $t\in[T]$ it holds that $F_{t-1}(x_t) - F_{t-1}(x_t^*) \leq \epsilon$. By the strong-convexity of $F_{t-1}$ (Eq. \ref{eq:strongconvexdist}) this yields that $\Vert{x_t - x_t^*}\Vert \leq \sqrt{\epsilon}$.

The proof is by induction on $t$. For $t=1$ it holds that $x_1 = x_1^*$ and thus the claim holds. Assume now that for time $t \geq 1$ it holds that $F_{t-1}(x_t) - F_{t-1}(x_t^*) \leq \epsilon$. By the strong-convexity of $F_{t-1}(x)$ and the induction hypothesis we have that
\begin{eqnarray}\label{ie:regretlemma:1}
\Vert{x_t - x_t^*}\Vert \leq \sqrt{\epsilon} .
\end{eqnarray}

By the definition of $F_t(x)$ and the optimality of $x_t^*$ we have that 
\begin{eqnarray*}
F_{t}(x_{t}^*) - F_{t}(x_{t+1}^*) &=& F_{t-1}(x_{t}^*) - F_{t-1}(x_{t+1}^*) + \eta\nabla{}f_t(x_t)\cdot(x_t^*-x_{t+1}^*)\\
& \leq & \eta{}G\Vert{x_{t+1}^* - x_t^*}\Vert ,
\end{eqnarray*}
 and thus again by the strong convexity of $F_t(x)$ we have that
\begin{eqnarray}\label{ie:regretlemma:2}
\Vert{x_{t+1}^* - x_t^*}\Vert \leq \eta{}G .
\end{eqnarray}

Combining Eq. \eqref{ie:regretlemma:1}, \eqref{ie:regretlemma:2} we have that
\begin{eqnarray*}
\Vert{x_t - x_{t+1}^*}\Vert \leq \sqrt{\epsilon} + \eta{}G .
\end{eqnarray*}

Using again the induction hypothesis, we have that
\begin{eqnarray}\label{ie:regretlemma:3}
F_{t}(x_t) - F_{t}(x_{t+1}^*) &=& F_{t-1}(x_t) - F_{t-1}(x_{t+1}^*) + \eta\nabla{}f_t(x_t)\cdot(x_t - x_{t+1}^*) \nonumber \\
&\leq & \epsilon + \eta{}G\Vert{x_t - x_{t+1}^*}\Vert 
\leq \epsilon + \eta{G}\sqrt{\epsilon} + \eta^2G^2 .
\end{eqnarray}

Setting $r_t = \sqrt{\epsilon}+\eta{}G$, we can apply Lemma \ref{lem:fw} with respect to $F_t(x), x_t, r_t$ and get,
\begin{eqnarray*}
F_{t}(x_{t+1}) - F_{t}(x_{t+1}^*) \leq  (1-\alpha)(F_{t}(x_{t}) - F_{t}(x_{t+1}^*)) + \alpha^2\rho^2\left({\sqrt{\epsilon}+\eta{}G}\right)^2 .
\end{eqnarray*}

Plugging Eq. \eqref{ie:regretlemma:3} we have that
\begin{eqnarray*}
F_{t}(x_{t+1}) - F_{t}(x_{t+1}^*) &\leq & (1-\alpha)\left({\epsilon + \eta{G}\sqrt{\epsilon} + \eta^2G^2}\right)
+ 2\alpha^2\rho^2\left({\epsilon +\eta{}G\sqrt{\epsilon} + \eta^2G^2}\right) \\
&=& \left({\epsilon + \eta{G}\sqrt{\epsilon} + \eta^2G^2}\right)(1-\alpha + 2\alpha^2\rho^2).
\end{eqnarray*}

Setting $\alpha = \frac{1}{3\rho^2}$ we get that
\begin{eqnarray*}
F_{t}(x_{t+1}) - F_{t}(x_{t+1}^*) \leq \left({\epsilon +\eta{}G\sqrt{\epsilon} + \eta^2G^2}\right)\left({1-\frac{1}{9\rho^2}}\right) .
\end{eqnarray*}

Finally, plugging $\eta = \frac{\sqrt{\epsilon}}{18G\rho^2}$ gives
\begin{eqnarray*}
F_{t}(x_{t+1}) - F_{t}(x_{t+1}^*) \leq \epsilon\left({1+\frac{1}{9\rho^2}}\right)\left({1-\frac{1}{9\rho^2}}\right) < \epsilon .
\end{eqnarray*}
\end{proof}

We also need the following lemma, originally proved in \cite{KalaiVempala}, that states that playing on each time $t$ the point in $\mP$ that minimizes the loss up to time $t$ (including), yields zero regret. A proof is given in the appendix for completeness.

\begin{lemma}\label{lem:be_the_leader}
Let $\lbrace{f_t(x)}\rbrace_{t=1}^T$ be a sequence of loss functions and let $\lbrace{w^*_t}\rbrace_{t=1}^T$ be a sequence of points such that for all $t\in[T]$, $w^*_t \in\arg\min_{w\in\mP}\sum_{\tau=1}^tf_{\tau}(w)$. Then it holds that
\begin{eqnarray*}
\sum_{t=1}^Tf_t(w^*_t) - \min_{w\in\mP}\sum_{t=1}^Tf_t(w) \leq 0 .
\end{eqnarray*}
\end{lemma}

We are now ready to prove Theorem \ref{thr:mainthr}.

\begin{proof}
Denote $x^* \in \arg\min_{x\in\mP}\sum_{t=1}^Tf_t(x)$.
We define a sequence of functions $\lbrace{\tilde{f}_t(x)}\rbrace_{t=1}^T$ as follows: $\tilde{f}_1(x) = \nabla{}f_1(x_1)\cdot x + \frac{1}{\eta}\Vert{x-x_1}\Vert^2$ and $\tilde{f}_t(x) = \nabla{}f_t(x_t)\cdot x$ for all $t \geq 2$. Note that for all $t\in[T]$ we have that $F_t(x) = \eta\sum_{\tau=1}^t\tilde{f}_{\tau}(x)$. Recall that the sequence of points $\lbrace{x^*_t}\rbrace_{t=1}^{T+1}$ satisfies that $x^*_t = \arg\min_{x\in\mP}F_{t-1}(x)$. Hence, for all $t\in[T]$, $x^*_{t+1} = \arg\min_{x\in\mP}F_t(x)$.
Thus, by Lemma \ref{lem:be_the_leader} we have that
\begin{eqnarray*}
\sum_{t=1}^T\tilde{f}_t(x^*_{t+1}) - \sum_{t=1}^T\tilde{f}_t(x^*)
= \sum_{t=1}^T\nabla{}f(x_t)\cdot (x^*_{t+1} - x^*) + \frac{1}{\eta}(\Vert{x_{2}^*-x_1}\Vert^2 - \Vert{x^*-x_1}\Vert^2)
 \leq 0 .
\end{eqnarray*}

Rearranging and using $\Vert{x^*-x_1}\Vert \leq D$ we have that
\begin{eqnarray}\label{eq:6}
\sum_{t=1}^T\nabla{}f_t(x_t)\cdot (x_{t+1}^* - x^*) \leq  \frac{D^2}{\eta} .
\end{eqnarray}

Fix $t\in[T]$. Since $F_t(x)$ is $2$-strongly convex, using Eq. \eqref{eq:strongconvexdist} we have that
\begin{eqnarray}\label{eq:7}
\Vert{x_t^* - x_{t+1}^*}\Vert^2 &\leq &F_{t}(x_t^*) - F_{t}(x_{t+1}^*) = F_{t-1}(x_t^*) - F_{t-1}(x_{t+1}^*) + \eta\nabla{}f_t(x_t)\cdot(x_t^* - x_{t+1}^*) \nonumber \\
&\leq & \eta{}G\Vert{x_t^* - x_{t+1}^*}\Vert,
\end{eqnarray}
where the last inequality follows from the optimality of $x_t^*$ with respect to $F_{t-1}(x)$ and the Cauchy-Schwartz inequality.

Combining Eq. \eqref{eq:6} and Eq. \eqref{eq:7} for all $t\in[T]$ via the Cauchy-Schwartz inequality we have that
\begin{eqnarray*}
\sum_{t=1}^T\nabla{}f_t(x_t) \cdot (x_{t}^* - x^*) \leq \frac{D^2}{\eta} + T\eta{}G^2 .
\end{eqnarray*}

Rearranging and using the Cauchy-Schwartz inequality again we have that
\begin{eqnarray*}
\sum_{t=1}^T\nabla{}f_t(x_t) \cdot (x_{t} - x^*) \leq \frac{D^2}{\eta} + T\eta{}G^2 + G\sum_{t=1}^T\Vert{x_t - x^*_t}\Vert .
\end{eqnarray*}

Fix $\epsilon = \frac{(D\rho)^2}{T}$. Applying Lemma \ref{regretlemma} with respect to our choice of $\epsilon$
and setting $\eta$ accordingly (and recalling that $\rho \geq 1$), we have that
\begin{eqnarray*}
\sum_{t=1}^Tf_t(x_t) - f_t(x^*) \leq \sum_{t=1}^T\nabla{}f_t(x_t)\cdot(x_t - x^*) = O(GD\rho\sqrt{T}),
\end{eqnarray*}
where the first inequality follows from convexity of each $f_t(x)$. 
The theorem now follows since according to our results from Section \ref{sec:llo} we can assume that $\rho = \sqrt{n}\mu$.

\end{proof}

\subsection{Analysis for strongly convex losses}
Here we analyze the regret of Algorithm \ref{alg:rftlfw} in case all loss function are at least $\sigma$-strongly convex for some $\sigma > 0$. The analysis goes along the same lines as the analysis for the non-strongly convex case, but requires a few modifications.

As in the previous subsection we define the sequence $\lbrace{x^*_t}\rbrace_{t=1}^{T+1}$ such that $x^*_t = \arg\min_{x\in\mP}F_{t-1}(x)$, where 
$F_t(x)$ for $t\in\{1,...,T\}$ is defined as in Algorithm \ref{alg:rftlfw} (for $\sigma > 0$) and in addition we define $F_0 := T_0\frac{\sigma}{2}\Vert{x-x_1}\Vert^2$.

\begin{lemma}\label{lemma:liphs}
For any $t\in[T]$, the function $\tilde{f}_t(x) = \nabla{}f_t(x_t)\cdot x + \frac{\sigma}{2}\Vert{x-x_t}\Vert^2$ is $L=G+\sigma{}D$ Lipschitz over $\mP$.
\end{lemma}
\begin{proof}
Fix two points $y,z\in\mP$. Since $\tilde{f}_t(x)$ is convex we have that
\begin{eqnarray*}
\tilde{f}_t(y)-\tilde{f}_t(z) &\leq & \nabla\tilde{f}_t(y) \cdot (y-z) = (\nabla{}f_t(x_t) + \sigma(y-x_t))\cdot(y-z) \\
&\leq & \Vert{\nabla{}f_t(x_t) + \sigma(y-x_t)}\Vert \cdot \Vert{y-z}\Vert
\leq (G+\sigma{}D)\Vert{y-z}\Vert,
\end{eqnarray*}
where the last inequality uses the triangle inequality and the upper bounds $G,D$ for $\Vert{\nabla{}f_t(x_t)}\Vert$ and $\Vert{y-x_t}\Vert$ respectively.
Since the above inequality is symmetric in $y,z$, the lemma follows.
\end{proof}

The following Lemma is analogues to Lemma \ref{regretlemma} for the non-strongly convex case.
\begin{lemma}\label{sc_regretlemma}
Let $L = G + \sigma{}D$, $\alpha = \frac{1}{5\rho^2}$ and $T_0 = (25\rho^2)^2$.
Let $\lbrace{\epsilon_t}\rbrace_{t=1}^T$ be a sequence of positive reals such that $\epsilon_t = \frac{(60\rho^2L)^2}{\sigma(t+T_0)}$ $\forall t\in[T]$. Let
\begin{eqnarray*}
r_t = \sqrt{\frac{4\epsilon_t}{\sigma(t+T_0)}} + \frac{2L}{\sigma(t+T_0)}  \quad \forall t\in[T] .
\end{eqnarray*}
Then, for any $t\in[T]$ it holds that 
$\Vert{x_t - x_t^*}\Vert \leq \sqrt{\frac{\epsilon_t}{\sigma(t-1+T_0)}}$.
\end{lemma}

\begin{proof}
The proof is similar to that of Lemma \ref{regretlemma}. 
Observe that on any time $t\in\{0,1,...,T\}$ it holds that the function $F_t(x)$ is $\sigma(t+T_0)$-strongly convex and $\sigma(t+T_0)$-smooth.

We prove that for any time $t\in[T]$ it holds that $F_{t-1}(x_t) - F_{t-1}(x_t^*) \leq \epsilon_t$, which by the strong convexity of $F_{t-1}(x)$ (see Eq. \eqref{eq:strongconvexdist}) implies that $\Vert{x_t - x_t^*}\Vert \leq \sqrt{\frac{2\epsilon_t}{\sigma(t-1+T_0)}}$.

Clearly for time $t=1$ the claim holds since $x_1 = x_1^*$. Assume that on time $t\geq 1$ it holds that $F_{t-1}(x_t) - F_{t-1}(x_t^*) \leq \epsilon_t$. By the strong convexity of $F_{t-1}(x)$ we again have that
\begin{eqnarray}\label{ie:sc_regretlemma_1}
\Vert{x_t - x_t^*}\Vert \leq \sqrt{\frac{2\epsilon_t}{\sigma(t-1+T_0)}} .
\end{eqnarray}

Define the function $\tilde{f}_t(x) = \nabla{}f_t(x_t) \cdot x + \frac{\sigma}{2}\Vert{x-x_t}\Vert^2$.
It holds that
\begin{eqnarray*}
F_{t}(x_t^*) - F_{t}(x_{t+1}^*) &=& F_{t-1}(x_t^*) - F_{t-1}(x_{t+1}^*) 
+\tilde{f}_t(x_t^*) - \tilde{f}_t(x_{t+1}^*) \\
&\leq &\tilde{f}_t(x_t^*) - \tilde{f}_t(x_{t+1}^*) \leq  L\Vert{x_t^*-x_{t+1}^*}\Vert,
\end{eqnarray*}
where the first inequality follows from the optimality of $x^*_t$ with respect to $F_{t-1}(x)$ and the second inequality follows from Lemma \ref{lemma:liphs}.

By the strong convexity of $F_t(x)$ we thus have that
\begin{eqnarray}\label{ie:sc_regretlemma_2}
\Vert{x_t^* - x_{t+1}^*}\Vert \leq \frac{2L}{\sigma(t+T_0)} .
\end{eqnarray}

Combining Eq. \eqref{ie:sc_regretlemma_1}, \eqref{ie:sc_regretlemma_2} via the triangle inequality we have that
\begin{eqnarray}\label{ie:sc_regretlemma:3}
\Vert{x_t - x_{t+1}^*}\Vert &\leq & \sqrt{\frac{2\epsilon_t}{\sigma(t-1+T_0)}} + \frac{2L}{\sigma(t+T_0)} \nonumber \\
& \leq & \sqrt{\frac{4\epsilon_t}{\sigma(t+T_0)}} + \frac{2L}{\sigma(t+T_0)},
\end{eqnarray}
where the second inequality holds since $T_0 \geq 1$.

Using the induction hypothesis we have that

\begin{eqnarray}\label{ie:sc_regretlemma:4}
F_{t}(x_t) - F_{t}(x_{t+1}^*) &=& F_{t-1}(x_t) - F_{t-1}(x_{t+1}^*) + \tilde{f}_t(x_t) - \tilde{f}_t(x_{t+1}^*) \nonumber \\
&\leq & \epsilon_t + \sqrt{\frac{4L^2\epsilon_t}{\sigma(t+T_0)}} + \frac{2L^2}{\sigma(t+T_0)} ,
\end{eqnarray}
where the inequality follows from Eq. \eqref{ie:sc_regretlemma:3} and Lemma \ref{lemma:liphs}.

Setting $r_t$ to equal the RHS of Eq. \eqref{ie:sc_regretlemma:3}, and applying Lemma \ref{lem:fw} with respect to $F_t(x)$ we have that
\begin{eqnarray*}
F_{t}(x_{t+1}) - F_{t}(x_{t+1}^*) &\leq & 
(1-\alpha)(F_{t}(x_{t}) - F_{t}(x_{t+1}^*)) + \frac{\sigma}{2}(t+T_0)\alpha^2\rho^2r_t^2 \\ &\leq & 
(1-\alpha)\left({\epsilon_t + \sqrt{\frac{4L^2\epsilon_t}{\sigma(t+T_0)}} + \frac{2L^2}{\sigma(t+T_0)}}\right)\\
&+& \sigma(t+T_0)\alpha^2\rho^2\left({\frac{4\epsilon_t}{\sigma(t+T_0)} +\frac{4L^2}{\sigma^2(t+T_0)^2}}\right) \\
&\leq &  (1-\alpha)\left({\epsilon_t + \sqrt{\frac{4L^2\epsilon_t}{\sigma(t+T_0)}} + \frac{2L^2}{\sigma(t+T_0)}}\right) \\
&+& 4\alpha^2\rho^2\left({\epsilon_t +\frac{L^2}{\sigma(t+T_0)}}\right)\\ & \leq & 
\left({\epsilon_t + \sqrt{\frac{4L^2\epsilon_t}{\sigma(t+T_0)}} + \frac{2L^2}{\sigma(t+T_0)}}\right)\left({1-\alpha + 4\alpha^2\rho^2}\right),
\end{eqnarray*}
where the second inequality follows from Eq. \eqref{ie:sc_regretlemma:4}, the value of $r_t$, and using $(a+b)^2 \leq 2a^2+2b^2$ to upper bound $r_t^2$. The rest of the inequalities follows from simple algebraic manipulations.

Setting $\alpha = \frac{1}{5\rho^2}$ we have that
\begin{eqnarray*}
F_{t}(x_{t+1}) - F_{t}(x_{t+1}^*) \leq 
\left({\epsilon_t + \sqrt{\frac{4L^2\epsilon_t}{\sigma(t+T_0)}} + \frac{2L^2}{\sigma(t+T_0)}}\right)\left({1-\frac{1}{25\rho^2}}\right) .
\end{eqnarray*}

Plugging in our choice $\epsilon_t = \frac{(60\rho^2L)^2}{\sigma(t+T_0)}$ we have that
\begin{eqnarray*}
F_{t}(x_{t+1}) - F_{t}(x_{t+1}^*) &\leq & 
\frac{(60\rho^2L)^2}{\sigma(t+T_0)}\left({1 + \frac{1}{30\rho^2} + \frac{1}{1800(\rho^2)^2}}\right)\left({1-\frac{1}{25\rho^2}}\right) \\
&< & \frac{(60\rho^2L)^2}{\sigma(t+T_0)}\left({1 + \frac{1}{25\rho^2}}\right)\left({1-\frac{1}{25\rho^2}}\right) \\
&=& \frac{(60\rho^2L)^2}{\sigma(t+T_0)}\left({1-\frac{1}{(25\rho^2)^2}}\right)  .
\end{eqnarray*}
Finally, setting $T_0 = (25\rho^2)^2$ we have that
\begin{eqnarray*}
F_{t}(x_{t+1}) - F_{t}(x_{t+1}^*) &\leq &  \frac{(60\rho^2L)^2}{\sigma(t+T_0)}\left({1-\frac{1}{T_0}}\right)
< \frac{(60\rho^2L)^2}{\sigma(t+T_0)}\left({1-\frac{1}{t+1 + T_0}}\right) \\
&=& \frac{(60\rho^2L)^2}{\sigma(t+T_0)}\cdot\frac{t+T_0}{t + 1+ T_0} = \frac{(60\rho^2L)^2}{\sigma(t+1+T_0)} 
= \epsilon_{t+1} .
\end{eqnarray*}
\end{proof}

We are now ready to prove Theorem \ref{thm:sc}.
\begin{proof}
The proof follows the lines of the proof for  Theorem \ref{thr:mainthr}. Define the sequence of functions $\lbrace{\tilde{f}_t(x)}\rbrace_{t=0}^T$ in the following way:
\begin{eqnarray*}
\tilde{f}_0(x) = T_0\frac{\sigma}{2}\Vert{x-x_1}\Vert^2; \qquad \tilde{f}_t(x) = \nabla{}f_t(x_t) \cdot x + \frac{\sigma}{2}\Vert{x-x_t}\Vert^2 \quad \forall t\in\{1,2,...,T\}.
\end{eqnarray*}

Note that for all $t\in\{0,1,...,T\}$ it holds that $F_t(x) = \sum_{\tau=0}^t\tilde{f}_t(x)$, where $F_t(x)$ is as defined in Algorithm \ref{alg:rftlfw} for the case $\sigma > 0$, and recall that we define $F_0(x) := T_0\frac{\sigma}{2}\Vert{x-x_1}\Vert^2$. Recall that we define a sequence of points $\lbrace{x^*_t}\rbrace_{t=1}^{T+1}$ such that for all $t\in[T+1]$, $x^*_t = \arg\min_{x\in\mP}F_{t-1}(x)$. Let $x^* = \arg\min_{x\in\mP}\sum_{t=1}^Tf_t(x)$.

According to Lemma \ref{lem:be_the_leader} it holds that
\begin{eqnarray*}
\sum_{t=0}^T\tilde{f}_t(x_{t+1}^*) - \tilde{f}_t(x^*) &=& \sum_{t=1}^T\tilde{f}_t(x_{t+1}^*) - \tilde{f}_t(x^*) + T_0\frac{\sigma}{2}\left({\Vert{x_1^* - x_1}\Vert^2 - \Vert{x^* - x_1}\Vert^2}\right) \leq 0 .
\end{eqnarray*}

Using the upper bound $\Vert{x^*-x_1}\Vert \leq D$ and plugging the value of $T_0$ in Algorithm \ref{alg:rftlfw} we have that
\begin{eqnarray}\label{eq:8}
\sum_{t=1}^T\tilde{f}_t(x_{t+1}^*) - \tilde{f}_t(x^*) \leq \frac{T_0\sigma{}D^2}{2} = O(\sigma{}D^2\rho^4) .
\end{eqnarray}

Fix $t\in[T]$. It holds that
\begin{eqnarray*}
F_t(x_t^*) - F_t(x_{t+1}^*) &=& F_{t-1}(x_t^*) - F_{t-1}(x_{t+1}^*) + \tilde{f}_t(x_t^*) - \tilde{f}_t(x_{t+1}^*) \\
& \leq & \tilde{f}_t(x_t^*) - \tilde{f}_t(x_{t+1}^*) \leq L\Vert{x_t^* - x_{t+1}^*}\Vert,
\end{eqnarray*}
where the first inequality follows from the optimality of $x_t^*$ with respect to $F_{t-1}(x)$, and the second inequality follows from Lemma \ref{lemma:liphs} and using $L=G+\sigma{}D$.
Since $F_t(x)$ is $\sigma(t+T_0)$-strongly convex, this implies via Eq. \eqref{eq:strongconvexdist} that
\begin{eqnarray*}
\Vert{x_t^* - x_{t+1}^*}\Vert \leq \frac{2L}{\sigma(t+T_0)} .
\end{eqnarray*}

Applying Lemma \ref{sc_regretlemma} with the triangle inequality we have that
\begin{eqnarray*}
\Vert{x_t - x_{t+1}^*}\Vert \leq \frac{2L}{\sigma(t+T_0)} + \frac{60\rho^2L}{\sigma\sqrt{(t+T_0)(t-1+T_0)}} = O\left({\frac{\rho^2L}{\sigma{}t}}\right),
\end{eqnarray*}
where the equality follows since $T_0 \geq 1$ and by definition $\rho \geq 1$.

Thus, using Lemma \ref{lemma:liphs} again we have that
\begin{eqnarray*}
\tilde{f}_t(x_t) - \tilde{f}_t(x_{t+1}^*) =  O\left({\frac{\rho^2L^2}{\sigma{}t}}\right) .
\end{eqnarray*}

Plugging the above for all $t\in[T]$ into Eq. \eqref{eq:8} we have that
\begin{eqnarray*}
\sum_{t=1}^T\tilde{f}_t(x_t) - \tilde{f}_t(x^*) = O(\sigma{}D^2\rho^4)  + \sum_{t=1}^TO\left({\frac{\rho^2L^2}{\sigma{}t}}\right)
=  O\left({\sigma{}D^2\rho^4 + \frac{\rho^2L^2}{\sigma}\log{T}}\right) .
\end{eqnarray*}

The theorem now follows from the observation that since for all $t\in[T]$, $f_t(x)$ is $\sigma$-strongly convex it holds that
\begin{eqnarray*}
f_t(x_t) - f_t(x^*) &\leq & \nabla{}f_t(x_t) \cdot (x_t-x^*) - \frac{\sigma}{2}\Vert{x_t - x^*}\Vert^2 \\
&=& \nabla{}f_t(x_t)\cdot (x_t-x^*) - \frac{\sigma}{2}(\Vert{x_t - x^*}\Vert^2 - \Vert{x_t - x_t}\Vert^2) \\
&=& \tilde{f_t}(x_t) - \tilde{f_t}(x^*)
\end{eqnarray*}

\end{proof}

\subsection{Bandit Algorithm}
In this section we give an online algorithm for the \textit{partial information} setting (bandits). The derivation is basically straightforward using our algorithm for the \textit{full information} setting (Algorithm \ref{alg:rftlfw}) and the technique of \cite{FKM05}.

For this section we assume that the feasible set $\mP$ (again a polytope) is a full dimensional. We assume without loss of generality that the origin lies in the interior of $\mP$ (note that our Algorithms and the complexity measure $\mu$ are invariant to translation) and we denote by $r_0$ the largest scalar such that $\ball_{r_0}(0)\subset\mP$.

We assume that the loss function $f_t(x)$ chosen by the adversary on time $t$ is chosen with knowledge of the history of the game but without any knowledge of possible randomization used by the decision maker on time $t$ to produce his prediction. We further assume without loss of generality that for each function $f_t(x)$ it holds that $f_t(0) = 0$.

Note that since we assume that the gradients of each $f_t(x)$ are bounded in magnitude by $G$, it holds that $f_t(x)$ is $G$-Lipschitz. This follows since, 
\begin{eqnarray}\label{eq:banditLip}
\forall{x,y\in\mP}: \quad f_t(x)-f_t(y) \leq (x-y)\cdot\nabla{}f_t(x) \leq G\Vert{x-y}\Vert .
\end{eqnarray}

Also, since $f_t(0)=0$ and $0\in\mP$ we have that
\begin{eqnarray}\label{eq:banditbound}
\forall{x\in\mP}: \quad f_t(x) = f_t(x) - f_t(0) \leq (x-0)\cdot\nabla{}f_t(x) \leq GD .
\end{eqnarray}

The algorithm is given below. The algorithm uses our full-information algorithm - Algorithm \ref{alg:rftlfw} as a black box in the following way: on each round of the game, the bandit algorithm asks Algorithm \ref{alg:rftlfw} for his prediction for time $t$ - $x_t$, and then it generates a loss function $\tilde{f}_t(x)$ based on the received bandit feedback, and sends it to Algorithm \ref{alg:rftlfw} as the loss function for time $t$. In this way, the bandit algorithm ``simulates" a full-information game for Algorithm \ref{alg:rftlfw}.

\begin{algorithm}[H]
\caption{LLOO-based Bandit Optimization}
\label{alg:banditfw}
\begin{algorithmic}[1]
\STATE Input: time horizon $T$, upper bound on magnitude of gradients - $G$
\STATE Init Algorithm \ref{alg:rftlfw} with the LLOO from Section \ref{sec:llo} and input parameters $T,G, \sigma=0$
\STATE Set: $\gamma \gets \sqrt{\frac{nD}{r_0}}T^{-1/4}, \qquad \delta \gets r_0\gamma$
\FOR{$t=1...T$}
\STATE Receive point $x_t$ from Algorithm \ref{alg:rftlfw}
\STATE Sample a unit vector $u_t$ uniformly at random
\STATE Play $y_t = (1-\gamma)x_t + \delta{}u_t$
\STATE Receive $f_t(y_t)$
\STATE $g_t \gets \frac{n}{\delta}f_t(y_t)u_t$
\STATE Define the loss function $\tilde{f}_t(x) := g_t\cdot x$
\STATE Feed $\tilde{f}_t(x)$ to Algorithm \ref{alg:rftlfw}
\ENDFOR
\end{algorithmic}
\end{algorithm}

The regret analysis of Algorithm \ref{alg:banditfw} closely follows the analysis in \cite{FKM05}, but instead of using Zinkevich's algorithm \cite{Zinkevich03} for the reduction from bandit feedback to full feedback, we use our Algorithm \ref{alg:rftlfw}.

For a proof of the following Lemma see Lemma 2.1 in \cite{FKM05}.
\begin{lemma}\label{lem:banditGrad}
Fix $t\in[T]$ and define the function $\hat{f}_t(x) = \E_{v}[f_t(x+\delta{}v)]$, where $v$ is a vector sampled uniformly at random from the unit ball. Let $z_t = (1-\gamma)x_t$. Then, $\E_{u_t}[g_t] = \nabla{}\hat{f}_t(z_t)$.
\end{lemma}

In order to derive the regret bound for Algorithm \ref{alg:banditfw} we need the following technical lemma.
\begin{lemma}\label{lem:banditdist}
For all $t\in[T]$, let $\hat{f}_t(x)$ be as in Lemma \ref{lem:banditGrad}. It holds that 
\begin{eqnarray*}
\E\left[{\max_{x\in\mP}\sum_{t=1}^T(z_t-x)\cdot(\nabla\hat{f}_t(z_t)-g_t)}\right] \leq  \frac{\sqrt{T}nGD^2}{\delta}.
\end{eqnarray*}
\end{lemma}

\begin{proof}
It holds that
\begin{eqnarray*}
\E\left[{\max_{x\in\mP}\sum_{t=1}^T(z_t-x)\cdot(\nabla\hat{f}_t(z_t)-g_t)}\right]
= \E\left[{\sum_{t=1}^Tz_t\cdot(\nabla\hat{f}_t(z_t)-g_t)}\right]
+ \E\left[{\max_{x\in\mP}\sum_{t=1}^Tx\cdot(g_t-\nabla\hat{f}_t(z_t))}\right] .
\end{eqnarray*}

Note that by Lemma \ref{lem:banditGrad} we have that for all $t\in[T]$, $\E[z_t\cdot(\nabla\hat{f}_t(z_t)-g_t) \, | \, z_t] = 0$, and thus,
\begin{eqnarray*}
\E\left[{\max_{x\in\mP}\sum_{t=1}^T(z_t-x)\cdot(\nabla\hat{f}_t(z_t)-g_t)}\right]
= \E\left[{\max_{x\in\mP}\sum_{t=1}^Tx\cdot(g_t-\nabla\hat{f}_t(z_t))}\right] .
\end{eqnarray*}

Using the Cauchy-Schwartz inequality and the bound $\max_{x\in\mP}\Vert{x}\Vert \leq D$ we have that
\begin{eqnarray}\label{eq:10}
\E\left[{\max_{x\in\mP}x\cdot\left({\sum_{t=1}^Tg_t-\nabla\hat{f}_t(z_t)}\right)}\right] &\leq &
\E\left[{D \cdot \Vert{\sum_{t=1}^T\nabla\hat{f}_t(z_t)-g_t}\Vert}\right]  \nonumber \\
&=& D\cdot \E\left[{\Vert{\sum_{t=1}^T\nabla\hat{f}_t(z_t)-g_t}\Vert}\right].
\end{eqnarray}

It now holds that
\begin{eqnarray}\label{eq:11}
\E\left[{\Vert{\sum_{t=1}^T\nabla\hat{f}_t(z_t)-g_t}\Vert}\right]^2 &\leq& \E\left[{\Vert{\sum_{t=1}^T\nabla\hat{f}_t(z_t)-g_t}\Vert^2}\right] \nonumber \\
&=& \E\left[{\sum_{t=1}^T\Vert{\nabla\hat{f}_t(z_t)-g_t}\Vert^2 + 2\sum_{1 \leq i < j \leq T}(\nabla\hat{f}_i(z_i)-g_i)\cdot(\nabla\hat{f}_j(z_j)-g_j)}\right] \nonumber \\
&= & \sum_{t=1}^T\E\left[{\Vert{\nabla\hat{f}_t(z_t)-g_t}\Vert^2}\right] + 2\sum_{1 \leq i < j \leq T}\E\left[{(\nabla\hat{f}_i(z_i)-g_i)\cdot(\nabla\hat{f}_j(z_j)-g_j)}\right] \nonumber \\
&= & \sum_{t=1}^T\E\left[{\Vert{g_t - \E_{u_t}[g_t]}\Vert^2}\right] + 2\sum_{1 \leq i < j \leq T}\E\left[{(\nabla\hat{f}_i(z_i)-g_i)\cdot(\nabla\hat{f}_j(z_j)-g_j)}\right] \nonumber \\
&\leq & \sum_{t=1}^T\E\left[{\Vert{g_t}\Vert^2}\right] + 2\sum_{1 \leq i < j \leq T}\E\left[{(\nabla\hat{f}_i(z_i)-g_i)\cdot(\nabla\hat{f}_j(z_j)-g_j)}\right],
\end{eqnarray}
where the last equality follows from Lemma \ref{lem:banditGrad} and the last inequality follows since for any random vector $w$ it holds that $\E[\Vert{w-\E[w]}\Vert^2] \leq \E[\Vert{w}\Vert^2]$.

For all $j > i$ it holds that
\begin{eqnarray}\label{eq:12}
\E\left[{(\nabla\hat{f}_i(z_i)-g_i)\cdot(\nabla\hat{f}_j(z_j)-g_j)}\right]
&=& \E_{\lbrace{u_t}\rbrace_{t=1}^i}\left[{(\nabla\hat{f}_i(z_i)-g_i)\cdot\E_{\lbrace{u_{\tau}}\rbrace_{\tau=i+1}^j}\left[{(\nabla\hat{f}_j(z_j)-g_j) \, | \, \lbrace{u_t}\rbrace_{t=1}^i}\right]}\right] \nonumber \\
&=& 0,
\end{eqnarray}
where the last equality follows since according to Lemma \ref{lem:banditGrad}, the inner expectation is zero.

Plugging Eq. \eqref{eq:12} into Eq. \eqref{eq:11} for all $i<j$ we have that

\begin{eqnarray}\label{eq:13}
\E\left[{\Vert{\sum_{t=1}^T\nabla\hat{f}_t(z_t)-g_t}\Vert}\right]^2 \leq \sum_{t=1}^T\Vert{g_t}\Vert^2
\leq T\left({\frac{nGD}{\delta}}\right)^2,
\end{eqnarray}
where we have used Eq. \eqref{eq:banditbound} in the last inequality.

Plugging Eq. \eqref{eq:13} into Eq. \eqref{eq:10} we finally have that
\begin{eqnarray*}
\E\left[{\max_{x\in\mP}x\cdot\left({\sum_{t=1}^T\nabla\hat{f}_t(z_t)-g_t}\right)}\right]\leq  \frac{\sqrt{T}nGD^2}{\delta} .
\end{eqnarray*}

\end{proof}

\begin{theorem}
For $T \geq \left({\frac{Dn}{r_0}}\right)^2$ it holds that
the sequence of points $\lbrace{y_t}\rbrace_{t=1}^T$ produced by Algorithm \ref{alg:banditfw} is feasible and satisfies
\begin{eqnarray*}
\mathbb{E}\left[{\sum_{t=1}^Tf_t(y_t) - \min_{x^*\in\mP}\sum_{t=1}^Tf_t(x^*)}\right] =  O\left({GD\sqrt{\frac{nD}{r_0}}T^{3/4}  + GD\mu\sqrt{nT}}\right) ,
\end{eqnarray*}
where the expectation is with respect to the randomness in choosing the vectors $\lbrace{u_t}\rbrace_{t=1}^T$.
\end{theorem}
\begin{proof}
First, we prove the feasibility of the sequence of points $\lbrace{y_t}\rbrace_{t=1}^T$. Fix $t\in[T]$. By Algorithm \ref{alg:banditfw} we have that $y_t = (1-\gamma)x_t + \delta{}u_t$, where $x_t\in\mP$ and $u_t$ is a unit vector. Since by our assumption it holds that $\ball_{r_0}(0)\subset\mP$, we have that $r_0u_t\in\mP$. Thus, for any $\gamma\in[0,1]$ and $\delta \in[0,\gamma{}r_0]$, it follows that $y_t\in\mP$. Clearly, the values of $\delta,\gamma$ in Algorithm \ref{alg:banditfw} satisfy these requirements.

We move to prove the regret bound.

By applying Theorem \ref{thr:mainthr} with respect to the loss functions $\lbrace{\tilde{f}_t(x) = g_t \cdot x}\rbrace_{t=1}^T$ we have that
\begin{eqnarray*}
\sum_{t=1}^Tx_t\cdot g_t - \min_{x^*\in\mP}\sum_{t=1}^Tx^*\cdot g_t = O\left({GD\rho\sqrt{T}}\right),
\end{eqnarray*}

where we recall that $\rho = \sqrt{n}\mu$.

For $t\in[T]$, denote $z_t = (1-\gamma)x_t$. Since $(1-\gamma)\in[0,1]$, it holds that
\begin{eqnarray}\label{eq:9}
\sum_{t=1}^Tz_t\cdot g_t - (1-\gamma)\min_{x^*\in\mP}\sum_{t=1}^Tx^*\cdot g_t 
&=& \sum_{t=1}^Tz_t\cdot g_t - \min_{x^*\in(1-\gamma)\mP}\sum_{t=1}^Tx^*\cdot g_t  \nonumber \\
&=& O\left({GD\rho\sqrt{T}}\right) .
\end{eqnarray}

Ranging we have that

\begin{eqnarray*}
\max_{x^*\in(1-\gamma)\mP}\sum_{t=1}^T(z_t-x^*)\cdot g_t = O\left({GD\rho\sqrt{T}}\right) .
\end{eqnarray*}

Plugging in Lemma \ref{lem:banditdist} and taking expectation we have that

\begin{eqnarray*}
\E\left[{\max_{x^*\in(1-\gamma)\mP}\sum_{t=1}^T(z_t-x^*)\cdot \nabla\hat{f}_t(z_t)}\right] = O\left({\frac{\sqrt{T}nGD^2}{\delta} + GD\rho\sqrt{T}}\right) .
\end{eqnarray*}

Note that since $f_t(x)$ is a convex function, so is $\hat{f}_t(x)$ and thus we have that
\begin{eqnarray}\label{eq:15}
\E\left[{\sum_{t=1}^T\hat{f}_t(z_t) - \min_{x^*\in(1-\gamma)\mP}\sum_{t=1}^T\hat{f}_t(x^*)}\right]  =  O\left({\frac{\sqrt{T}nGD^2}{\delta}  + GD\rho\sqrt{T}}\right) .
\end{eqnarray}

Fix $t\in[T]$. Note that since $f_t(x)$ is $G$-Lipschitz (see Eq. \eqref{eq:banditLip}), for all $x\in\mP$ it holds that
\begin{eqnarray}\label{eq:14}
\vert{\hat{f}_t(x) - f_t(x)}\vert &=& \vert{\E_{v\in\ball}[f_t(x+\delta{}v) - f_t(x)]}\vert \nonumber
\\
&\leq &\E_{v\in\ball}[\vert{f_t(x+\delta{}v)-f_t(x)}\vert] \leq G\delta .
\end{eqnarray}

Plugging Eq. \eqref{eq:14} for all $t\in[T]$ into Eq. \eqref{eq:15} we have that
\begin{eqnarray*}
\E\left[{\sum_{t=1}^Tf_t(z_t) - \min_{x^*\in(1-\gamma)\mP}\sum_{t=1}^Tf_t(x^*)}\right] =  O\left({TG\delta+ \frac{\sqrt{T}nGD^2}{\delta} +  GD\rho\sqrt{T}}\right) .
\end{eqnarray*}

Since for all $t\in[T]$, $\Vert{y_t-z_t}\Vert \leq \delta$, using the Lipschits property of $f_t(x)$ we have that
\begin{eqnarray*}
\E\left[{\sum_{t=1}^Tf_t(y_t) - \min_{x^*\in(1-\gamma)\mP}\sum_{t=1}^Tf_t(x^*)}\right] =  O\left({TG\delta+ \frac{\sqrt{T}nGD^2}{\delta} +  GD\rho\sqrt{T}}\right) .
\end{eqnarray*}

Using again Eq. \eqref{eq:banditLip}, \eqref{eq:banditbound} and the fact that $0\in\mP$, $f_t(0)=0$, we have that for all $x\in\mP$ and $t\in[T]$, $f_t((1-\gamma)x) \leq (1-\gamma)f_t(x) \leq f_t(x) + \gamma{}GD$. Thus we have that
\begin{eqnarray*}
\E[\regret_T] &:=& \E\left[{\sum_{t=1}^Tf_t(y_t) - \min_{x^*\in\mP}\sum_{t=1}^Tf_t(x^*)}\right] \\
& =& O\left({T\gamma{}GD + TG\delta + \frac{\sqrt{T}nGD^2}{\delta}  + GD\rho\sqrt{T}}\right) .
\end{eqnarray*}

Now, setting $\delta = \gamma{}r_0$ as in Algorithm \ref{alg:banditfw}, and recalling that $D \geq r_0$, we have that
\begin{eqnarray*}
\E[\regret_T] = O\left({T\gamma{}GD + \frac{\sqrt{T}nGD^2}{\gamma{}r_0}  + GD\rho\sqrt{T}}\right) .
\end{eqnarray*}

Finally, setting $\gamma = \sqrt{\frac{Dn}{r_0}}T^{-1/4}$ we have that
\begin{eqnarray*}
\E[\regret_T] = O\left({GD\sqrt{\frac{nD}{r_0}}T^{3/4}  + GD\rho\sqrt{T}}\right) .
\end{eqnarray*}

\end{proof}

\section{Lower bound} \label{sec:lowerbounds}

In this section we revisit our main result from Section \ref{sec:offline_opt}, that is, our linearly converging algorithm for smooth and strongly convex optimization over polytopes. We show that in certain settings our convergence rate (i.e., number of calls to the linear optimization oracle to reach a certain approximation error) is in fact nearly tight and cannot be improved beyond constants and logarithmic terms for conditional gradient-like algorithms, i.e., algorithms that can request a vertex of the polytope that minimizes the dot product with a certain linear objective and take linear combinations of these vertices. Similar arguments appear in \cite{Jaggi13b, Lan13}.

Towards this end, consider the following optimization problem:
\begin{eqnarray}\label{eq:lowboundprob}
\min_{x\in\mS_n}\lbrace{f(x) := \frac{1}{2}\Vert{x-\frac{1}{n}\vec{1}}\Vert^2}\rbrace,
\end{eqnarray}
where $\mS_n$ is the probabilistic simplex in $\mathbb{R}^n$ and $\vec{1}$ is the all-ones vector in $\mathbb{R}^n$. Note that the feasible point $x^* = \frac{1}{n}\vec{1}$ is the optimal solution to Problem \eqref{eq:lowboundprob} with value $f(x^*) = 0$. Note also that $f(x)$ is $1$-smooth and $1$-strongly convex.

Given a conditional gradient-like algorithm, as described above, and assuming without losing generality that the initial iterate of the algorithm $x_1$ is a vertex of $\mS_n$, let $x_t$ denote the last feasible iterate that the algorithm produced by making no more than $t-1$ queries to the linear optimization oracle of $\mS_n$. Since, as we assume, each call to the linear optimization oracle returns a vertex, i.e., a member of the standard basis in $\mathbb{R}^n$, it follows that $\Vert{x_t}\Vert_0 \leq t$ (here $\Vert{\cdot}\Vert_0$ measures the number of non-zeros). It follows from a simple calculation that
\begin{eqnarray*}
f(x_t) - f(x^*)= \frac{1}{2}\Vert{x_t - \frac{1}{n}\vec{1}}\Vert^2 \geq \frac{1}{2}(n-t)\cdot\frac{1}{n^2} \geq \frac{1}{4n} \quad \forall t\leq \frac{n}{2} .
\end{eqnarray*}

It thus follows that in order for a conditional gradient-like method to solve Problem \eqref{eq:lowboundprob} up to an error of at most $\frac{1}{4n}$, it requires $\Omega(n)$ calls to the linear optimization oracle of $\mS_n$.

We now show that for Problem \eqref{eq:lowboundprob} our algorithm, Algorithm \ref{alg:offline}, nearly matches the above lower bound. To see this, note that we can write $\mS_n$ in the following way: $\mS_n = \lbrace{x\in\mathbb{R}^n \, | \, -Ix \leq \vec{0}, \, x\cdot\vec{1} = 1}\rbrace$. It thus follows that for the simplex we have that $\psi = 1$,$\xi = 1$,$D=\sqrt{2}$ which implies that $\mu = O(1)$. Plugging this into Theorem \ref{thm:offthm} we have that Algorithm \ref{alg:offline} requires $O(n\log(1/\epsilon)$ calls to the linear optimization oracle in order to produce an $\epsilon$-approximated solution to Problem \eqref{eq:lowboundprob} for any $\epsilon > 0$, which matches the above lower bound up to a logarithmic factor.

\section*{Acknowledgments.}

The authors would like to thank Arkadi Nemirovski for numerous helpful comments on an earlier draft of this paper. 


\bibliographystyle{plain}
\bibliography{bib}

\appendix

\section{Proof of Lemma \ref{lem:be_the_leader}}

For clarity, we first restate the lemma and then prove it.
\begin{lemma}
Let $\lbrace{f_t(x)}\rbrace_{t=1}^T$ be a sequence of loss functions and let $\lbrace{x^*_t}\rbrace_{t=1}^T$ be a sequence of points such that for all $t\in[T]$, $x^*_t \in\arg\min_{x\in\mP}\sum_{\tau=1}^tf_{\tau}(x)$. Then it holds that
\begin{eqnarray*}
\sum_{t=1}^Tf_t(x^*_t) - \min_{x\in\mP}\sum_{t=1}^Tf_t(x) \leq 0 .
\end{eqnarray*}
\end{lemma}
\begin{proof}
We prove by induction that for any $\tau\in[T]$ it holds that
\begin{eqnarray*}
\sum_{t=1}^{\tau}f_t(x^*_t) - \min_{x\in\mP}\sum_{t=1}^{\tau}f_t(x) \leq 0 .
\end{eqnarray*}

For the base case $\tau = 1$ the claim clearly holds since $x^*_1 \in\arg\min_{x\in\mP}f_1(x)$. Assume now that the claim holds for some $\tau \geq 1$. On time $\tau+1$ it holds that
\begin{eqnarray*}
\sum_{t=1}^{\tau+1}f_t(x^*_t) - \min_{x\in\mP}\sum_{t=1}^{\tau+1}f_t(x)
&\leq & \min_{y\in\mP}\sum_{t=1}^{\tau}f_t(y) + f_{\tau+1}(x^*_{\tau+1})  - \min_{x\in\mP}\sum_{t=1}^{\tau+1}f_t(x) \\
&\leq & \sum_{t=1}^{\tau}f_t(x^*_{\tau+1}) + f_{\tau+1}(x^*_{\tau+1})  - \min_{x\in\mP}\sum_{t=1}^{\tau+1}f_t(x) \\
&=& \sum_{t=1}^{\tau+1}f_t(x^*_{\tau+1}) - \min_{x\in\mP}\sum_{t=1}^{\tau+1}f_t(x) 
 = 0 ,
\end{eqnarray*}
where the first inequality follows from the induction hypothesis and the third one from the optimality of $x^*_{\tau+1}$.
\end{proof}

\end{document}